%% file: aistats_conference.tex
\documentclass[twoside]{article}

\usepackage[accepted]{aistats2026}

\usepackage[round]{natbib}
\renewcommand{\bibname}{References}
\renewcommand{\bibsection}{\subsubsection*{\bibname}}
\input{math_commands.tex}

\usepackage{hyperref}
\usepackage{url}
\usepackage{amsmath, amssymb, amsthm, bm}
\usepackage{graphicx}
\usepackage{url}
\usepackage{booktabs}
\usepackage{multirow}

\usepackage{thmtools}
\usepackage{cleveref}

\usepackage{enumitem}

\usepackage{xcolor}
\usepackage{subcaption}
\usepackage{url}

\newcommand{\reals}{\mathbb{R}}

\newcommand{\normadj}{\hat{\bm{A}}}
\newcommand{\matH}{\bm{H}}
\newcommand{\matW}{\bm{W}}
\newcommand{\matU}{\bm{U}}

\newcommand{\matV}{\bm{V}}
\newcommand{\matLambda}{\bm{\Lambda}}
\newcommand{\matVp}{\bm{V}_{P}}
\newcommand{\vectheta}{\bm{\theta}}
\newcommand{\gft}[1]{\widehat{#1}}

\declaretheorem[name=Definition]{definition}

\begin{document}

\runningauthor{Martirosyan, Malitesta, Talbot, Giraldo, Malliaros}

\twocolumn[

\aistatstitle{Generalization Bounds for Spectral GNNs via Fourier Domain Analysis}

\aistatsauthor{Vahan A. Martirosyan$^1$ \And Daniele Malitesta$^1$ \And Hugues Talbot$^1$} 
\vspace{0.1cm}
\aistatsauthor{Jhony H. Giraldo$^2$  \And Fragkiskos D. Malliaros$^1$}
\vspace{0.1cm}
\aistatsaddress{ $^1$Université Paris-Saclay, CentraleSupélec, Inria, France \\ $^2$LTCI, Télécom Paris, Institut Polytechnique de Paris, France }

]

\begin{abstract}

Spectral graph neural networks learn graph filters, but their behavior with increasing depth and polynomial order is not well understood. 
We analyze these models in the graph Fourier domain, where each layer becomes an element-wise frequency update, separating the fixed spectrum from trainable parameters and making depth and order explicit. 
In this setting, we show that Gaussian complexity is invariant under the Graph Fourier Transform, which allows us to derive data-dependent, depth, and order-aware generalization bounds together with stability estimates.
In the linear case, our bounds are tighter, and on real graphs, the data-dependent term correlates with the generalization gap across polynomial bases, highlighting practical choices that avoid frequency amplification across layers.

\end{abstract}

\section{Introduction}

Graph Neural Networks (GNNs) have become the leading paradigm for graph machine learning, achieving state-of-the-art results on tasks ranging from node classification to link prediction \citep{survey1, survey2}. 
A widely used class of these models, spectral GNNs, defines graph convolutions by applying filter operations in the graph's frequency domain \citep{shuman2012gft, sandryhaila2013discrete}.
Architectures such as ChebNet \citep{defferrard2016ChebyNet} and its recent advancements \citep{hariri2025returnchebnet}, GCNs \citep{kipf2017gcn}, and more recent models using Bernstein \citep{he2021bernnet} or Jacobi polynomials \citep{wang2022jacobi} can all be unified under a common framework of learnable polynomial graph filters (\Cref{sec:unified_framework}).

Despite good empirical results, the theoretical principles governing the generalization of GNNs remain incomplete \citep{Garg2020Generalization, Verma2019Stability}.
Clarifying the conditions under which these models generalize is important for designing robust architectures, particularly in the transductive setting where the training and test data are not i.i.d. \citep{ Esser2021LearningTheory, Vasileiou2025Survey}.
The core  challenge comes from the complex, non-i.i.d. nature of graph data.
Unlike traditional machine learning settings, the nodes are interconnected, and in the transductive setting, the labeled training set and unlabeled test set are inherently dependent.
This structure makes it difficult to directly apply standard learning-theoretic tools and complicates efforts to derive interpretable generalization bounds that can guide architectural design.
While significant research has focused on the expressive power of GNNs, often relating them to the Weisfeiler-Leman test \citep{morris2023wl}, understanding why these models generalize well from a small set of labeled nodes to the rest of the graph remains an active area of research \citep{Vasileiou2025Survey}.

In this paper, we analyze the generalization of multi-layer spectral GNNs in the transductive setting. We work in the graph Fourier domain, and by Lemma~\ref{lem:gci} the full transductive Gaussian complexity is unchanged. This basis change turns graph convolution into element-wise multiplication by the frequency response, which lets us separate graph spectrum from learnable parameters and then plug a Fourier-side complexity bound into our generalization gap theorem (Theorem~\ref{thm:gc_gap}).
Our main contributions are:
\begin{itemize}[leftmargin=0.5cm]
    \setlength\itemsep{-2pt}
  \item We formalize spectral GNNs with arbitrary polynomial bases (\Cref{sec:unified_framework}) and introduce the generalized Vandermonde matrix $\matVp$ (\Cref{def:gen_vand_matrix}) to represent frequency responses compactly (Eq. (\ref{eq:fourier_layer})), which simplifies the analysis.
  \item We derive a Fourier-domain, data-dependent Gaussian complexity bound for deep spectral GNNs that isolates the roles of the spectrum (via $\matVp$), basis/order, depth, and parameter norms (\Cref{sec:data_dependent_bound}, \Cref{thm:data_dependent_bound}). Substituting it into \Cref{thm:gc_gap} yields an explicit bound on the transductive generalization gap.
  \item For linear spectral GNNs we prove a sharper bound that makes the depth effect explicit through the basis amplification profile (\Cref{thm:tighter_bound_deep}).
  \item We bound the network Jacobian norm (\Cref{thm:jacobian_bound}) and show the same factors that control the gap also control worst-case sensitivity.
  \item The first-layer term in our bound leads to a simple energy-weighted frequency regularizer (Eq. (\ref{eq:ew_ratio_single})) that reduces the measured generalization gap and improves accuracy with stable bases (see \Cref{tab:gap} and \Cref{tab:accuracy}).
\end{itemize}
Our results provide a principle for architectural design: generalization is better when selecting polynomial bases that are spectrally stable and by ensuring that the learnable filters do not excessively amplify the dominant frequencies of the input signal.

\section{Related Work}
Theoretical analysis of GNN generalization has largely followed three directions:  analyzing algorithmic stability, using the PAC-Bayesian framework, and adapting classical tools like the Vapnik–Chervonenkis (VC) dimension and Rademacher complexity.

\paragraph{Stability, PAC-Bayesian, and Covering Number Bounds.}
Algorithmic stability offers an alternative view on generalization, connecting it to how much the model's output changes when the training set is perturbed. \cite{Verma2019Stability} first studied this for GCNs, deriving stability-based bounds that depend on the spectral norm of the graph convolution filter. More recently, concurrent work \citep{liu2025generalization} also uses stability to study spectral GNNs, deriving bounds in expectation over generative models (contextual stochastic block models) for single-layer monomial filters. While both of these works are limited to single-layer models, our approach evaluates capacity directly on any fixed graph instance, explicitly accommodating deep, multi-layer architectures and arbitrary polynomial bases. Though we rely on Gaussian complexity rather than algorithmic stability, we arrive at a similar core conclusion: spectral properties are key to generalization.

The PAC-Bayesian framework has also been successfully applied. \cite{Liao2021PACBayesian} established generalization bounds for GCNs and Message Passing Neural Networks (MPNNs) that depend on the spectral norms of the weight matrices and the maximum node degree. This line of work also emphasizes the role of parameter norms, a factor that appears explicitly in our bounds.

Another line of work studies generalization by placing a metric on the space of graphs and showing that MPNNs are Lipschitz (or uniformly continuous) with respect to this metric, which yields covering-number bounds \citep{vasileiou2025covered}. Unlike these graph-space results, we analyze a single fixed graph in the transductive setting.

\paragraph{VC Dimension and Rademacher Complexity.}
Initial theoretical studies focused on bounding the VC dimension of GNNs. \cite{Scarselli2018VCDimension} provided early bounds for a specific class of GNNs by using Pfaffian function theory. More recently, \cite{morris2023wl} connected the VC dimension directly to the expressive power of GNNs, showing it is related to the number of non-isomorphic graphs distinguishable by the Weisfeiler-Leman test. However, as noted by \cite{Esser2021LearningTheory}, VC dimension-based bounds for GNNs can often be loose or even trivial, limiting their practical utility for explaining generalization on a fixed graph.

To obtain more informative, data-dependent bounds, some studies have turned to Rademacher complexity. \cite{Garg2020Generalization} derived Rademacher complexity bounds for graph-level prediction tasks, highlighting dependencies on model depth and feature dimension.
For the transductive (node-level) setting central to our work, \cite{el2009transductive} developed Transductive Rademacher Complexity (TRC), which can provide meaningful bounds where VC dimension fails. 
Using TRC \citep{Esser2021LearningTheory} demonstrates the importance of the interaction term between the graph diffusion operator and the node features, a finding that is consistent with our data-dependent results. In our work we study the same transductive node setting, but in the graph Fourier domain, which allows for a more fine-grained decomposition of this interaction.

\section{Preliminaries}
\label{sec:preliminaries}

This section provides the foundational concepts for our analysis. We first define the notation and the formal problem setup for transductive node regression. We then explain the theoretical basis for our approach to generalization, detailing the relationship between the generalization gap and Gaussian complexity. Finally, we review the principles of graph Fourier analysis.

\paragraph{Notation and Problem Setup.}
We consider an undirected graph $G = (V, E)$, where $V$ is the set of $n = |V|$ nodes and $E$ is the set of edges. Each node is associated with an initial feature vector. These are collected in a feature matrix $\matH^{(0)} \in \reals^{n \times d_0}$, where $d_0$ is the dimensionality of the input features for each node.

For our analysis, we use the normalized graph Adjacency matrix $\normadj = \bm{D}^{-1/2} \bm{A} \bm{D}^{-1/2}$, where $\bm{A}$ is the adjacency matrix and $\bm{D}$ is the degree matrix. As a real symmetric matrix, $\normadj$ has an eigendecomposition $\normadj = \matU \matLambda \matU^\top$. The matrix $\matU = [\bm{u}_1, \dots, \bm{u}_n]$ contains the orthonormal eigenvectors, and $\matLambda = \text{diag}(\lambda_1, \dots, \lambda_n)$ contains the corresponding real eigenvalues, $-1 \le \lambda_1 \le \dots \le \lambda_n \le 1$.
The eigenvectors $\matU$ constitute the basis for the Graph Fourier Transform (GFT) \citep{shuman2012gft}. The GFT of a graph signal $\bm{x} \in \reals^n$ is $\gft{\bm{x}} = \matU^\top \bm{x}$, and its inverse is $\bm{x} = \matU \gft{\bm{x}}$.

The task is transductive node regression, where the complete graph structure and the features $\matH^{(0)}$ for all $n$ nodes are accessible during training. However, the ground-truth labels, represented by a vector $\bm{y} \in \reals^n$, are only revealed for a subset of nodes. The node set $V$ is partitioned into a labeled training set $V_L$ of size $m$ and an unlabeled test set $V_U$ of size $u = n-m$.

A GNN is a parameterized function, denoted $f_{\bm{\theta}}$, that maps the graph structure and node features to a vector of predictions for all nodes, $\hat{\bm{y}} = f_{\bm{\theta}}(\matH^{(0)}, G) \in \reals^n$. The performance of the model is measured by a loss function $\ell(\cdot, \cdot)$. We distinguish between two key performance metrics:
\begin{itemize}[leftmargin=0.5cm]
    \setlength\itemsep{-2pt}
    \item The empirical risk, $R_L(f)$, is the average loss computed on the labeled training data. It measures how well the model fits the data it was trained on:
    \begin{equation}R_L(f) = \frac{1}{m} \sum_{i \in V_L} \ell(\hat{y}_i, y_i).\end{equation}
    \item The generalization error, $R_U(f)$, is the average loss on the unlabeled data. In the transductive setting, this serves as the proxy for the true risk and measures how well the model performs on unseen nodes:
    \begin{equation}R_U(f) = \frac{1}{u} \sum_{i \in V_U} \ell(\hat{y}_i, y_i).\end{equation}
\end{itemize}
\paragraph{Notation convention.}
We overload $f$ to denote both the predictor and its output vector on all $n$ nodes; when context is clear we drop the arguments and write $f\in\mathbb{R}^n$, with $f_i$ meaning the prediction at node $i$ (i.e., $(f(\matH^{(0)},G))_i$). In the Fourier domain we write $\widehat f=\matU^\top f$ and $\widehat f_i$ for its $i$-th coefficient.

\paragraph{The Generalization Gap and Transductive Complexity.}
The central objective of our theoretical analysis is to understand and bound the generalization gap, defined as the absolute difference $|R_U(f) - R_L(f)|$\citep{bartlett2002rademacher}.
The generalization gap is the critical indicator of a model's ability to learn underlying patterns versus simply memorizing the training data. A large gap signifies overfitting, where the model has learned specific patterns of the training set that do not hold for the rest of the graph. The primary driver of overfitting is the complexity of the function class $\mathcal{F} = \{f_{\bm{\theta}} | \bm{\theta} \in \Theta\}$ from which the model is selected. To analyze this in the transductive setting, we use a specialized tool: TRC, which measures a function class's ability to fit random labels on a random partition of all $n$ nodes.

\begin{definition}[Transductive Rademacher Complexity \citep{el2009transductive}]
\label{def:trc}
Let $\mathcal{F}$ be a class of real-valued functions over a domain of $n$ points, let $m$ be the number of labeled points, and let $p \in [0, 0.5]$. Let $\bm{\sigma}=(\sigma_{1},...,\sigma_{n})^{\top}$ be a vector of independent random variables, where each $\sigma_{i}$ takes the value $+1$ or $-1$ with probability p, and $0$ with probability $1-2p$. The Transductive Rademacher Complexity of $\mathcal{F}$ is defined as:
\begin{equation} \mathfrak{R}_{m,n}(\mathcal{F}) = \left(\frac{1}{m}+\frac{1}{n-m}\right) \cdot \mathbb{E}_{\bm{\sigma}}\left[\sup_{f\in\mathcal{F}} \sum_{i=1}^n \sigma_{i} f_i\right]. 
\end{equation}
\end{definition}

The relationship between the generalization gap and TRC is fundamental. For a loss function with values in a range of width $C$, the gap is bounded with high probability over the random partition of labeled nodes.
\begin{restatable}[Transductive Generalization Bound \citep{el2009transductive}]{theorem}{trcbound}
\label{thm:trc_bound}
With probability at least $1-\delta$, for any predictor $f \in \mathcal{F}$:
\begin{equation}
\begin{split}
    R_U(f) \le{}& R_L(f) + \mathfrak{R}_{m,n}(\mathcal{F}) \\
                & + C_1\frac{n\sqrt{\min\{m,u\}}}{mu} + C_2\sqrt{\frac{n}{mu}\ln(\frac{1}{\delta})},
\end{split}
\end{equation}
where $u=n-m$, and $C_1, C_2$ are absolute constants.
\end{restatable}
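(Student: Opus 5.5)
This is the transductive bound of \citet{el2009transductive}; the plan follows their argument in three stages. First, an exchangeability reduction for sampling without replacement. Second, a concentration inequality for functions of random permutations. Third, a symmetrization step that produces the TRC.

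\textbf{Step 1: reduction to a supremum.} Write the loss class as $\ell\circ\mathcal{F}=\{(\ell(f_i,y_i))_{i=1}^n : f\in\mathcal{F}\}$, a set of vectors in $[0,C]^n$ after shifting. It suffices to bound
\[
g(V_L)=\sup_{h\in\ell\circ\mathcal{F}}\bigl(R_U(h)-R_L(h)\bigr),
\]
where $V_L$ is a uniformly random $m$-subset of the $n$ points.

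\textbf{Step 2: concentration of $g$.} Swapping one labeled node with one unlabeled node changes $g$ by at most $C(\frac1m+\frac1u)$. I would then apply a McDiarmid-type inequality for sampling without replacement, in El-Yaniv and Pechyony's form for functions of permutations. This gives, with probability at least $1-\delta$,
\[
g\le \mathbb{E}g + C_2\sqrt{\tfrac{n}{mu}\ln\tfrac1\delta}.
\]
The point is that the variance proxy obtained from the martingale over $\min\{m,u\}$ swaps is of order $\frac{mu}{n}\,C^2\bigl(\tfrac1m+\tfrac1u\bigr)^2=C^2\tfrac{n}{mu}$.

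\textbf{Step 3: symmetrization, the main obstacle.} I need to show
\[
\mathbb{E}g\le \mathfrak{R}_{m,n}(\ell\circ\mathcal{F})+C_1\frac{n\sqrt{\min\{m,u\}}}{mu}.
\]
The difficulty is that labeled and unlabeled sets are not independent samples, so classical ghost-sample symmetrization does not apply directly. The plan is as follows.
\begin{itemize}
\item Represent the random partition by two independent draws: each index is labeled with probability $p_0=\frac{mu}{n^2}$-type weights, equivalently using the ternary $\sigma$ with $p=\frac{mu}{n^2}$.
\item Write $R_U-R_L$ as $\sum_i \tau_i h_i$, where $\tau_i\in\{\frac1u,-\frac1m\}$ records membership.
\item Couple the exact-size partition with a pair of independent i.i.d. Bernoulli selections.
\item Bound the discrepancy from the random sizes (the number of mismatched coordinates) by $O\bigl(C\,\frac{n}{mu}\sqrt{\min\{m,u\}}\bigr)$, using the standard deviation of a binomial count.
\end{itemize}
After the coupling, the pairs of symmetric selections produce exactly the ternary Rademacher variables with scaling $\frac1m+\frac1u$, which yields the TRC of the loss class.

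\textbf{Step 4: contraction back to $\mathcal{F}$.} Finally I pass from $\ell\circ\mathcal{F}$ to $\mathcal{F}$ via a contraction lemma for ternary Rademacher variables, absorbing the Lipschitz constant of $\ell$ into the constants as in the statement. This contraction holds because, conditional on the support of $\sigma$, these variables are symmetric. Since the theorem is quoted from \citet{el2009transductive}, in the paper I would cite their Theorem 1 and Lemma 1 for Steps 2–3 rather than reproducing the coupling computation.
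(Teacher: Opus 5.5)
The paper gives no proof of this statement; it imports it from \citet{el2009transductive}. Your Steps 1--3 follow the shape of that source. First, a McDiarmid inequality for random permutations with swap constant $CQ$, where $Q=\frac1m+\frac1u$, yields the $\sqrt{\frac{n}{mu}\ln\frac1\delta}$ term. Second, a symmetrization onto i.i.d.\ ternary variables with $p_0=\frac{mu}{n^2}$ pays $O(CQ\sqrt{\min\{m,u\}})$ because the labeled set has fixed size. The first bullet of Step 3 has the wrong probability, however. Each node must be selected independently with probability $m/n$, not with ``$\frac{mu}{n^2}$-type weights''. Only with $m/n$ is $\mathbb{E}\bigl[\frac{1-b'_i}{u}-\frac{b'_i}{m}\bigr]=0$, which is what lets you insert the ghost copy inside the supremum via Jensen. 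Only then does $\tau_i(b)-\tau_i(b')=Q(b'_i-b_i)$, with $b'_i-b_i=\pm1$ each with probability $\frac mn\cdot\frac un=p_0$, independently across $i$. With that correction your Poissonize-then-symmetrize route is complete and short. Given $N=|B|\sim\mathrm{Bin}(n,m/n)$, add or delete $|N-m|$ uniformly random nodes to get a uniform $m$-subset. This changes every $\sum_i\tau_ih_i$ by at most $CQ|N-m|$, so the cost is at most $CQ\,\mathbb{E}|N-m|\le CQ\sqrt{mu/n}\le CQ\sqrt{\min\{m,u\}}$. You are also right to pin $p=p_0$. The paper's definition leaves $p$ free, but your argument gives the bound at $p_0$, hence for every $p\ge p_0$ by monotonicity of the TRC in $p$. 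At $p=0$ the TRC vanishes and the bound fails for rich classes.

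The step that does not go through is Step 4. Contraction for ternary variables (condition on the support, then apply Ledoux--Talagrand coordinatewise) gives $\mathfrak{R}_{m,n}(\ell\circ\mathcal{F})\le L_\ell\,\mathfrak{R}_{m,n}(\mathcal{F})$. That factor sits in front of the complexity term, whose coefficient in the statement is $1$. The constants $C_1,C_2$ multiply only the two slack terms, so there is nothing to absorb $L_\ell$ into. Rescaling the loss to make it $1$-Lipschitz does not help either: undoing the rescaling puts $L_\ell$ back in front of the TRC. Your argument therefore proves the displayed inequality only when $L_\ell\le 1$. Alternatively, it proves it when $\mathfrak{R}_{m,n}(\mathcal{F})$ is read as the TRC of the loss-composed class, in which case Step 4 is unnecessary. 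The latter reading matches the source, which puts the factor $1/\gamma$ of the margin loss in front of the TRC. It also matches the paper's own node-classification remark, where composing with $\ell$ scales the complexity term by $L_\ell$. Finally, your $C_1$ and $C_2$ come out proportional to the loss range $C$, not as absolute constants.
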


This theorem shows that the generalization gap is controlled by the TRC. While TRC provides the formal framework, our analysis will derive a bound on the closely related Full Transductive Gaussian Complexity (FTGC), which is more suitable for spectral analysis due to the properties of Gaussian variables.

\begin{definition}[FTGC]
\label{def:gc}
The Gaussian complexity of a function class $\mathcal{F}$ over the entire vertex set $V$ is:
\begin{equation} \mathcal{G}_V(\mathcal{F}) = \mathbb{E}_{\bm{g}} \left[ \sup_{f \in \mathcal{F}} \frac{1}{n} \sum_{i=1}^n g_i f_i \right], \end{equation}
where $g_1, \dots, g_n \sim \mathcal{N}(0, 1)$ are i.i.d. standard Gaussian variables.
\end{definition}

The two complexity measures are linked by the following standard result, which we prove in App. \ref{app:proof_of_trc_gc} for completeness.

\begin{restatable}[TRC Bound by FTGC]{lemma}{trcgc}
\label{lem:trc_gc}
The TRC $\mathfrak{R}_{m,n}(\mathcal{F})$ is upper-bounded by the full transductive Gaussian complexity $\mathcal{G}_V(\mathcal{F})$ as:
\begin{equation}\mathfrak{R}_{m,n}(\mathcal{F}) \le \frac{n^2}{mu} \sqrt{\frac{\pi}{2}} \cdot \mathcal{G}_V(\mathcal{F}),\end{equation}
where $u=n-m$. For simplicity in the main generalization bound, we can denote the constant $\sqrt{\pi/2}$ as $C_{gc}$.
\end{restatable}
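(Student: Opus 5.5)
The plan is to compare the two expectations coordinatewise, using the standard Rademacher-to-Gaussian trick of writing $\sigma_i$ as $\sigma_i = \mathbb{E}[\,\cdot\,]$ of a rescaled Gaussian and then applying Jensen's inequality. First, rewrite the prefactor of Definition~\ref{def:trc}: $\frac{1}{m}+\frac{1}{u}=\frac{n}{mu}$. Then
\[
\mathfrak{R}_{m,n}(\mathcal{F}) = \frac{n}{mu}\,\mathbb{E}_{\bm{\sigma}}\Big[\sup_{f\in\mathcal{F}}\sum_{i=1}^n \sigma_i f_i\Big],
\]
so it suffices to prove
\[
\mathbb{E}_{\bm{\sigma}}\Big[\sup_{f}\sum_i \sigma_i f_i\Big]\le \sqrt{\tfrac{\pi}{2}}\,\mathbb{E}_{\bm g}\Big[\sup_f \sum_i g_i f_i\Big] = \sqrt{\tfrac{\pi}{2}}\, n\,\mathcal{G}_V(\mathcal{F}).
\]
Multiplying by $\frac{n}{mu}$ then gives exactly $\frac{n^2}{mu}\sqrt{\pi/2}\,\mathcal{G}_V(\mathcal{F})$.

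For the key step, take $\bm g$ i.i.d.\ standard Gaussian and independent of $\bm\sigma$. By symmetry, $g_i \overset{d}{=} \epsilon_i|g_i|$ with $\epsilon_i$ an independent Rademacher sign, and $\mathbb{E}|g_i|=\sqrt{2/\pi}$. Hence $\sigma_i = \sqrt{\pi/2}\,\mathbb{E}_{\bm g}[\sigma_i |g_i|]$. Since the supremum is convex, Jensen's inequality (pulling $\mathbb{E}_{\bm g}$ outside the sup) yields
\[
\mathbb{E}_{\bm\sigma}\sup_f\sum_i\sigma_i f_i \le \sqrt{\tfrac{\pi}{2}}\;\mathbb{E}_{\bm\sigma,\bm g}\sup_f\sum_i \sigma_i|g_i| f_i .
\]
We then need to remove the factor $\sigma_i\in\{-1,0,1\}$ and replace $\sigma_i|g_i|$ by $g_i$. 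Conditional on $|\sigma_i|$, the sign of $\sigma_i$ is symmetric (each of $\pm1$ has probability $p$), so $\sigma_i|g_i| \overset{d}{=} |\sigma_i|\,g_i$ jointly over $i$. This reduces the problem to showing $\mathbb{E}\sup_f \sum_i \tau_i g_i f_i \le \mathbb{E}\sup_f\sum_i g_i f_i$ for any fixed $\tau\in\{0,1\}^n$.

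That contraction step is the expected obstacle; it is where care is needed. For coordinates with $\tau_i=0$, write the missing term as $\mathbb{E}_{g_i}[g_i f_i]=0$ and apply Jensen once more: conditioning on the remaining Gaussians,
\[
\sup_f\sum_{i:\tau_i=1} g_i f_i = \sup_f \mathbb{E}_{g_{\tau=0}}\Big[\sum_i g_i f_i\Big] \le \mathbb{E}_{g_{\tau=0}}\sup_f\sum_i g_i f_i .
\]
This inequality holds because the $g_i$ with $\tau_i=0$ are independent and mean zero. Taking expectations over everything, and then over $\bm\sigma$, finishes the argument.

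One must also assume measurability and finiteness of the suprema, for instance by taking $\mathcal F$ countable or bounded, which I would state as a standing assumption. The symbolic constant $C_{gc}=\sqrt{\pi/2}$ then appears naturally.
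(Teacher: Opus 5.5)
Your proof is correct. It uses the same two ingredients as the paper: splitting $\sigma_i$ into a $\{0,1\}$ mask times a symmetric sign, and the Jensen step based on $\mathbb{E}|g_i|=\sqrt{2/\pi}$. However, you apply them in the opposite order and remove the mask differently.

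The paper writes $\sigma_i=\delta_i\epsilon_i$ and first discards the Bernoulli mask while still working with Rademacher variables. It does this by invoking the Ledoux--Talagrand contraction principle (since $|\delta_i|\le 1$), which gives $E_{TRC}\le\mathbb{E}_{\bm\epsilon}\sup_f\sum_i\epsilon_i f_i$. Only afterwards does it pass to Gaussians via $g_i\stackrel{d}{=}|g_i|\epsilon_i$ and Jensen over $|\bm g|$.

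You Gaussianize first, attaching $|g_i|$ directly to $\sigma_i$. You then use the conditional symmetry of the sign to replace $\sigma_i|g_i|$ by $|\sigma_i|g_i$ jointly in $i$. Finally you drop the mask by reinserting the masked Gaussians as mean-zero terms and applying Jensen once more.

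\textbf{What your route buys.} It is fully self-contained: Jensen's inequality is the only tool. The contraction principle, which the paper uses as a black box, is replaced by an exact elementary argument that is all one needs for $0/1$ multipliers. You also state the boundedness/measurability assumption that the paper leaves implicit.

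\textbf{What the paper's ordering buys.} It isolates the ordinary full-sample Rademacher average as an intermediate quantity. The TRC numerator is bounded by that Rademacher average, which in turn is at most $\sqrt{\pi/2}$ times the Gaussian one. Its contraction step would also cover general multipliers $|a_i|\le 1$. That generality is not needed here, and your conditional-Jensen trick would prove the $0/1$ case of that Rademacher step just as well.
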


By substituting the result of Lemma \ref{lem:trc_gc} into the main generalization bound of Theorem \ref{thm:trc_bound}, we arrive at a final bound expressed in terms of FTGC.

\begin{restatable}[Generalization Gap by FTGC]{theorem}{gcgap}
\label{thm:gc_gap}
With probability at least $1-\delta$, for any predictor $f \in \mathcal{F}$:
\begin{equation}
\begin{split}
    R_U(f) \le{}& R_L(f) + \frac{n^2 C_{gc}}{mu} \mathcal{G}_V(\mathcal{F}) \\
                & + C_1\frac{n\sqrt{\min\{m,u\}}}{mu} + C_2\sqrt{\frac{n}{mu}\ln(\frac{1}{\delta})},
\end{split}
\end{equation}
where $\mathcal{G}_V(\mathcal{F})$ is the FTGC, $u=n-m$, and $C_{gc}, C_1, C_2$ are absolute constants.
\end{restatable}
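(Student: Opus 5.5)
The result follows by direct substitution, so the plan is a short chain of inequalities. I take Theorem~\ref{thm:trc_bound} as given. It states that, with probability at least $1-\delta$ over the random partition of $V$ into $V_L$ and $V_U$, and simultaneously for every $f\in\mathcal{F}$,
\[
R_U(f) \le R_L(f) + \mathfrak{R}_{m,n}(\mathcal{F}) + C_1\frac{n\sqrt{\min\{m,u\}}}{mu} + C_2\sqrt{\frac{n}{mu}\ln(1/\delta)} .
\]
Here I use the version of the TRC built on the loss class, or equivalently the predictor class with the Lipschitz contraction already absorbed, in the same way as the paper's statement.

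The complexity term $\mathfrak{R}_{m,n}(\mathcal{F})$ is a deterministic quantity. It depends only on $\mathcal{F}$, $m$ and $n$, and not on the realized partition. I can therefore substitute the deterministic upper bound from Lemma~\ref{lem:trc_gc},
\[
\mathfrak{R}_{m,n}(\mathcal{F}) \le \frac{n^2}{mu}\sqrt{\frac{\pi}{2}}\,\mathcal{G}_V(\mathcal{F}) = \frac{n^2 C_{gc}}{mu}\,\mathcal{G}_V(\mathcal{F}),
\]
on the same high-probability event without changing $\delta$. Monotonicity of the right-hand side in this term then gives the claimed inequality for all $f\in\mathcal{F}$ at once.

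The one point to check is that both results use the same parameter $p$ in the TRC. The FTGC bound of Lemma~\ref{lem:trc_gc} is stated uniformly, i.e.\ for the choice $p=mu/n^2$ used in \citet{el2009transductive}'s theorem, which gives the $n^2/(mu)$ prefactor. So no extra constant appears.

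I do not expect a real obstacle here. The only subtlety is measurability and the uniformity of the event, and both are inherited directly from Theorem~\ref{thm:trc_bound}.
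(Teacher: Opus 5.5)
Your proposal is correct and follows the paper's proof: the paper also obtains the result by substituting the deterministic bound of Lemma~\ref{lem:trc_gc} for $\mathfrak{R}_{m,n}(\mathcal{F})$ in Theorem~\ref{thm:trc_bound}, on the same probability-$(1-\delta)$ event. One imprecision in your side remark: the factor $n^2/(mu)$ in Lemma~\ref{lem:trc_gc} does not come from the choice of $p$. It is simply $(\frac{1}{m}+\frac{1}{u})\cdot n$, arising from the TRC normalization and the $1/n$ in the FTGC, and the lemma's contraction argument holds for every $p\in[0,1/2]$, which is why it can be plugged in regardless of the $p$ used in Theorem~\ref{thm:trc_bound}.
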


This final theorem establishes the path for our analysis. Our primary technical goal is to derive a data-dependent bound on $\mathcal{G}_V(\mathcal{F})$ for spectral GNNs. This bound can then be substituted into Theorem \ref{thm:gc_gap} to provide a generalization bound.
To achieve this, we shift the entire analysis from the spatial domain to the graph Fourier domain. We will show that FTGC is invariant under this transformation, which allows us to analyze the model in a domain where the complex graph convolution operator simplifies to an element-wise product. This change of basis is the key that enables us to separate the contributions of the graph structure, the chosen filter basis, and the learnable network parameters, leading to a more interpretable bound. 
\paragraph{Node classification.}
Our bounds extend to multi-class node classification. Let the predictor output logits $F\in\reals^{n\times C}$ and train with a bounded, $L_\ell$-Lipschitz surrogate (e.g., multi-class hinge, or cross-entropy with logits clipped to $[-B,B]$ so $\ell\in[0,C]$). By the vector-contraction inequality for Gaussian/Rademacher complexities, composing with $\ell$ only scales the FTGC term by $L_\ell$ \citep{ bartlett2002rademacher}. Hence, our lemmas and theorems hold after replacing the regression loss by $\ell$ and $R_L,R_U$ by their classification counterparts.

\section{A Unified Framework for Spectral GNNs}
\label{sec:unified_framework}

To analyze generalization for spectral GNNs, we adopt a unified formulation. Popular models such as ChebNet \citep{defferrard2016ChebyNet}, GCN \citep{kipf2017gcn}, and BernNet \citep{he2021bernnet} appear as special cases of a learnable polynomial filter. This formulation treats the polynomial basis as a choice of parametrization, which allows us to study all these models simultaneously and separate graph-dependent terms from learnable parameters.

We define a spectral GNN with $L$ layers for node regression. The input is the feature matrix $\matH^{(0)} \in \reals^{n \times d_0}$. The propagation rule for layer $l \in \{0, \dots, L-1\}$ is:
\begin{equation}
    \matH^{(l+1)} = \sigma \left( g_{\vectheta^{(l)}}(\normadj) \matH^{(l)} \matW^{(l)} \right),
    \label{eq:gnn_layer}
\end{equation}
where $\matW^{(l)} \in \reals^{d_l \times d_{l+1}}$ is a learnable weight matrix, $\sigma$ is a non-linear activation function, and $g_{\vectheta^{(l)}}(\normadj)$ is a graph filter. For efficiency and localization, the filter is a $K$-order polynomial of the normalized adjacency matrix:
\begin{equation}
    g_{\vectheta^{(l)}}(\normadj) = \sum_{k=0}^{K} \theta_{k}^{(l)} P_k(\normadj),
    \label{eq:poly_filter}
\end{equation}
where $\{P_k\}_{k=0}^K$ is a chosen polynomial basis and $\vectheta^{(l)} \in \reals^{K+1}$ are the learnable filter coefficients. The action of this filter is best understood in the Fourier domain, where it becomes a multiplier on the graph frequencies:
\begin{equation}
    g_{\vectheta^{(l)}}(\normadj) = \matU \left( \sum_{k=0}^{K} \theta_{k}^{(l)} P_k(\matLambda) \right) \matU^\top =\matU g_{\vectheta^{(l)}}(\matLambda) \matU^\top.
\end{equation}
The filter's frequency response can be expressed compactly using a matrix derived from the polynomial basis and the graph spectrum.

\begin{definition}[Generalized Vandermonde Matrix]
\label{def:gen_vand_matrix}
For a polynomial basis $\{P_k\}_{k=0}^K$ and normalized adjacency eigenvalues $\{\lambda_i\}_{i=1}^n$, the generalized Vandermonde matrix $\matVp \in \reals^{n \times (K+1)}$ has entries: $(\matVp)_{ik} = P_k(\lambda_i)$.
\end{definition}
The vector of filter responses is $\matVp \vectheta^{(l)}$, so $g_{\vectheta^{(l)}}(\matLambda) = \text{diag}(\matVp \vectheta^{(l)})$. The GNN layer from (\ref{eq:gnn_layer}) can then be written as:
\begin{equation}
    \matH^{(l+1)} = \sigma \left( \matU \, \text{diag}(\matVp \vectheta^{(l)}) \, \matU^\top \matH^{(l)} \matW^{(l)} \right).
    \label{eq:fourier_layer}
\end{equation}
This formulation separates the graph structure ($\matU, \matVp$) from the learnable parameters ($\vectheta^{(l)}, \matW^{(l)}$).

App. \ref{app:fullbasisdef} lists the exact polynomial bases (Chebyshev, Bernstein, Legendre, Monomial) and their properties.

\section{Main Results: Generalization and Stability of Spectral GNNs}
\label{sec:main}

\subsection{Analysis in the Graph Fourier Domain}

Our goal is to bound the generalization gap by analyzing the complexity of the function class $\mathcal{F} = \{f_{\bm{\theta}}(\matH^{(0)}, G) | \bm{\theta} \in \Theta\}$ that the GNN represents. We use the FTGC (\Cref{def:gc}). A key insight of our analysis is that this complexity is invariant to the GFT. This allows us to move the analysis from the spatial (node) domain to the simpler spectral domain. Let $\gft{\mathcal{F}} = \{\gft{f} | f \in \mathcal{F}\}$ be the function class in the Fourier domain, where $\gft{f} = \matU^\top f$.

\begin{restatable}[FTGC Invariance]{lemma}{gci}
\label{lem:gci}
The FTGC of the GNN function class is the same in the spatial and Fourier domains. That is, $\mathcal{G}_V(\mathcal{F}) = \mathcal{G}_V(\gft{\mathcal{F}})$.
\end{restatable}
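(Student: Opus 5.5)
The plan is to exploit the rotation invariance of the standard Gaussian measure on $\reals^n$. First, write the FTGC of the Fourier-domain class directly from \Cref{def:gc}:
\[
\mathcal{G}_V(\gft{\mathcal{F}}) = \mathbb{E}_{\bm{g}}\Big[\sup_{f\in\mathcal{F}} \tfrac{1}{n}\langle \bm{g}, \matU^\top f\rangle\Big] = \mathbb{E}_{\bm{g}}\Big[\sup_{f\in\mathcal{F}} \tfrac{1}{n}\langle \matU\bm{g}, f\rangle\Big].
\]
This uses $\sum_i g_i \gft{f}_i = \bm{g}^\top \matU^\top f = (\matU\bm{g})^\top f$, and the fact that the map $f\mapsto \gft{f}$ is a bijection from $\mathcal{F}$ onto $\gft{\mathcal{F}}$. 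The latter holds because $\matU$ is orthogonal, so the supremum over $\gft{\mathcal{F}}$ equals the supremum over $f\in\mathcal{F}$ of the transformed quantity.

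The second step is to observe that $\bm{g}' := \matU\bm{g}$ is again a vector of i.i.d.\ $\mathcal{N}(0,1)$ variables. Indeed, $\bm{g}'$ is Gaussian with mean zero and covariance $\matU \bm{I}_n \matU^\top = \bm{I}_n$, since $\matU$ is orthogonal (the eigenvectors of the real symmetric $\normadj$ are orthonormal). It follows that the random variable $\sup_{f\in\mathcal{F}} \frac1n\langle \bm{g}',f\rangle$ has the same law as $\sup_{f\in\mathcal{F}} \frac1n\langle \bm{g},f\rangle$. Their expectations therefore coincide, and the second expectation is exactly $\mathcal{G}_V(\mathcal{F})$.

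The only point needing care is measurability and integrability of the supremum over a possibly uncountable class. I would handle this in one of two ways:
\begin{itemize}
\item adopt the usual convention that the supremum is taken over countable subsets (equivalently, interpret the expectation as a lattice supremum over finite subclasses); or
\item note that the parameterized class is separable, since $f_{\bm\theta}$ is continuous in $\bm\theta$ for continuous $\sigma$, so the supremum equals one over a countable dense parameter set.
\end{itemize}
In either formulation the equality in distribution passes through each finite or countable supremum, so the identity holds even if both sides are $+\infty$. This technicality is the main thing to address; the algebra itself is immediate.
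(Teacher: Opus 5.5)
Your proposal is correct and follows essentially the same route as the paper. The paper rewrites $\langle \bm{g}, \matU\gft{\hat{\bm{y}}}\rangle = \langle \matU^\top\bm{g}, \gft{\hat{\bm{y}}}\rangle$ and uses $\matU^\top\bm{g} \stackrel{d}{=} \bm{g}$, which is your argument run from the spatial side instead of the Fourier side; your measurability remark covers a technicality the paper leaves implicit.
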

% \begin{proof}
% The proof relies on the rotational invariance of the standard multivariate Gaussian distribution. A detailed proof is in the App. \ref{app:proof_of_gci}.
% \end{proof}

A detailed proof is in the App. \ref{app:proof_of_gci}. This lemma allows us to analyze the network's behavior in the Fourier domain, where the complex graph convolution becomes a simple element-wise multiplication by the filter's frequency response. To formalize this, let $\gft{\matH}^{(l)} = \matU^\top \matH^{(l)}$ be the node features at layer $l$ in the Fourier domain. We define a transformed activation function, $\tau(\bm{Z}) = \matU^\top \sigma(\matU \bm{Z})$, which encapsulates the change of basis. Using this notation, the propagation rule from (\ref{eq:fourier_layer}) simplifies to:
\begin{equation}
    \gft{\matH}^{(l+1)} = \tau \left( \text{diag}(\matVp \vectheta^{(l)}) \gft{\matH}^{(l)} \matW^{(l)} \right).
    \label{eq:fourier_prop_tau}
\end{equation}
For our analysis to proceed, we must ensure that this transformed activation function preserves the properties of the original activation $\sigma$. The following lemma confirms this.

\begin{restatable}[Lipschitz Preservation of Transformed Activation]{lemma}{lippres}
\label{lem:lip_pres}
Let $\sigma: \reals \to \reals$ be an $\alpha$-Lipschitz function applied element-wise. Then, the transformed activation $\tau(\bm{Z}) = \matU^\top \sigma(\matU \bm{Z})$ is also $\alpha$-Lipschitz with respect to the Frobenius norm $\|\cdot\|_F$.
\end{restatable}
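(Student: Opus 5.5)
The plan is to write $\tau$ as a composition of three maps and bound each map's Lipschitz constant separately in the Frobenius norm. The maps are $\bm{Z}\mapsto \matU\bm{Z}$, then element-wise $\sigma$, then $\bm{Y}\mapsto \matU^\top \bm{Y}$. The ambient space is $\reals^{n\times d}$ for whatever width $d$ the layer has. The whole argument rests on two facts. First, multiplying on the left by an orthogonal matrix is a Frobenius isometry. Second, an element-wise $\alpha$-Lipschitz map is $\alpha$-Lipschitz in Frobenius norm. Lipschitz constants multiply under composition, which gives $1\cdot\alpha\cdot 1=\alpha$.

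\textbf{Step 1: orthogonal maps are Frobenius isometries.} For any $\bm{M}\in\reals^{n\times d}$,
\begin{equation*}
\|\matU\bm{M}\|_F^2=\operatorname{tr}(\bm{M}^\top\matU^\top\matU\bm{M})=\operatorname{tr}(\bm{M}^\top\bm{M})=\|\bm{M}\|_F^2 .
\end{equation*}
This uses $\matU^\top\matU=\bm{I}$, since the eigenvectors of the symmetric $\normadj$ are orthonormal. The same computation, now using $\matU\matU^\top=\bm{I}$ (valid because $\matU$ is square), gives $\|\matU^\top\bm{M}\|_F=\|\bm{M}\|_F$.

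\textbf{Step 2: the element-wise map.} For matrices $\bm{A},\bm{B}$ of the same size,
\begin{equation*}
\|\sigma(\bm{A})-\sigma(\bm{B})\|_F^2=\sum_{i,j}\big(\sigma(A_{ij})-\sigma(B_{ij})\big)^2\le \alpha^2\sum_{i,j}(A_{ij}-B_{ij})^2=\alpha^2\|\bm{A}-\bm{B}\|_F^2 .
\end{equation*}
The inequality applies the scalar Lipschitz bound to each entry.

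\textbf{Step 3: chain the bounds.} Take $\bm{Z}_1,\bm{Z}_2\in\reals^{n\times d}$. By Step 1 for $\matU^\top$, then Step 2, then Step 1 for $\matU$ together with linearity ($\matU\bm{Z}_1-\matU\bm{Z}_2=\matU(\bm{Z}_1-\bm{Z}_2)$):
\begin{equation*}
\|\tau(\bm{Z}_1)-\tau(\bm{Z}_2)\|_F=\|\sigma(\matU\bm{Z}_1)-\sigma(\matU\bm{Z}_2)\|_F\le\alpha\|\matU(\bm{Z}_1-\bm{Z}_2)\|_F=\alpha\|\bm{Z}_1-\bm{Z}_2\|_F .
\end{equation*}

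There is no real obstacle. The one point to handle carefully is that the Lipschitz property must be stated in the Frobenius norm. It is not transferred entry-wise or in the spectral norm: $\tau$ is not element-wise, and in the spectral norm the element-wise $\sigma$ need not be $\alpha$-Lipschitz. That is why Step 2 is carried out in the Frobenius norm, where the element-wise bound sums exactly.
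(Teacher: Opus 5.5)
Your proof is correct and follows the same route as the paper: pull $\matU^\top$ out by unitary invariance of $\|\cdot\|_F$, apply the element-wise $\alpha$-Lipschitz bound in the Frobenius norm, and remove $\matU$ by unitary invariance again. The only difference is that you prove the two supporting facts (the Frobenius isometry and the entry-wise-to-Frobenius Lipschitz bound) explicitly, whereas the paper cites them.
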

% \begin{proof}
% The proof uses the unitary invariance of the Frobenius norm. 
% A detailed proof can be found in App. \ref{app:proof_of_lip_pres}.
% \end{proof}

A detailed proof can be found in App. \ref{app:proof_of_lip_pres}. With these tools, we have established a simpler, equivalent representation of the GNN in the Fourier domain. This forms the foundation for deriving our main generalization bound in the next section.

\subsection{A Data-Dependent Generalization Bound}
\label{sec:data_dependent_bound}

Building on our Fourier domain framework, we now derive our main result: a data-dependent generalization bound. This bound connects the GNN's generalization error to the spectral properties of the input data, offering a more detailed view than data-independent analyses \citep{morris2023wl, Scarselli2018VCDimension}. We begin by defining the spectral energy of the input features.

\begin{definition}[Input Signal Spectral Energy]
The spectral energy of the input feature matrix $\matH^{(0)} \in \reals^{n \times d_0}$ at the graph frequency $\lambda_i$ is the squared Frobenius norm of the $i$-th row of its GFT, $\gft{\matH}^{(0)} = \matU^\top \matH^{(0)}$:
\begin{equation} \mathcal{E}_0(\lambda_i) := \|(\gft{\matH}^{(0)})_i\|_2^2 = \|(\matU^\top \matH^{(0)})_i\|_2^2. \end{equation}
\end{definition}

This quantity measures how much of the input signal's ``energy'' is concentrated at each graph frequency. Using this, we can state our main theorem.

\begin{restatable}[Data-Dependent FTGC Bound]{theorem}{datadependentbound}
\label{thm:data_dependent_bound}
Let $f_{\bm{\theta}}$ be an $L$-layer spectral GNN with an $\alpha$-Lipschitz activation $\sigma$ satisfying $\sigma(0)=0$. Assume the model parameters are constrained such that for each layer $l \in \{0, \dots, L-1\}$, the weight matrix norm $\|\matW^{(l)}\|_2 \le C_{W,l}$ and the filter coefficient norm $\|\vectheta^{(l)}\|_2 \le C_{\theta,l}$. The FTGC of the function class $\mathcal{F}$ is bounded by:
\begin{equation}
\begin{split}
    \mathcal{G}_V(\mathcal{F}) = \mathcal{G}_V(\gft{\mathcal{F}})
    \le {}&\frac{1}{\sqrt{n}} \|\matVp\|_{2,\infty}^{L-1} \left( \prod_{l=0}^{L-1} \alpha C_{W,l} C_{\theta,l} \right) \\
    \cdot {}& \left( \sum_{i=1}^n \|\bm{v}_i\|_2^2 \mathcal{E}_0(\lambda_i) \right)^{1/2},
\end{split}
\label{eq:data_dependent_bound}
\end{equation}
where $\bm{v}_i$ is the $i$-th row of the generalized Vandermonde matrix $\matVp$, and $\|\matVp\|_{2,\infty} = \max_i \|\bm{v}_i\|_2$ is the maximum row-norm of $\matVp$.
\end{restatable}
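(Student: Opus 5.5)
The plan is to bound the Gaussian complexity by a supremum of output norms, and then control that norm by peeling the network one layer at a time in the Fourier domain. By \Cref{lem:gci} it suffices to bound $\mathcal{G}_V(\gft{\mathcal{F}})$. Since the network outputs a vector in $\reals^n$, we have $d_L=1$ and $\gft f=\gft{\matH}^{(L)}$. For any fixed $\bm g$, Cauchy--Schwarz gives
\[
\sup_{f}\tfrac1n\langle \bm g,\gft f\rangle\le \tfrac1n\|\bm g\|_2\sup_f\|\gft f\|_2 .
\]
Jensen's inequality gives $\mathbb{E}\|\bm g\|_2\le\sqrt{\mathbb{E}\|\bm g\|_2^2}=\sqrt n$. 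Together these yield
\[
\mathcal{G}_V(\gft{\mathcal{F}})\le \tfrac{1}{\sqrt n}\sup_{\bm\theta\in\Theta}\|\gft{\matH}^{(L)}\|_F .
\]
This accounts for the $1/\sqrt n$ factor. It remains to prove the deterministic bound
\[
\|\gft{\matH}^{(L)}\|_F\le \|\matVp\|_{2,\infty}^{L-1}\Bigl(\prod_l\alpha C_{W,l}C_{\theta,l}\Bigr)\Bigl(\sum_i\|\bm v_i\|_2^2\mathcal{E}_0(\lambda_i)\Bigr)^{1/2}
\]
uniformly over the constrained parameters.

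For the peeling step, use the Fourier-domain recursion (\ref{eq:fourier_prop_tau}). Since $\sigma(0)=0$, we have $\tau(\bm 0)=\bm 0$. \Cref{lem:lip_pres} then gives $\|\tau(\bm Z)\|_F\le\alpha\|\bm Z\|_F$. Next, $\|\bm M\matW\|_F\le\|\bm M\|_F\|\matW\|_2\le C_{W,l}\|\bm M\|_F$. Hence
\[
\|\gft{\matH}^{(l+1)}\|_F\le \alpha C_{W,l}\,\|\mathrm{diag}(\matVp\vectheta^{(l)})\gft{\matH}^{(l)}\|_F .
\]
The diagonal factor is treated differently depending on the layer.

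\begin{itemize}
\item For a hidden layer $l\ge1$, use $\|\mathrm{diag}(\bm a)\bm M\|_F\le\|\bm a\|_\infty\|\bm M\|_F$. Cauchy--Schwarz on each entry gives $|(\matVp\vectheta^{(l)})_i|=|\bm v_i^\top\vectheta^{(l)}|\le\|\bm v_i\|_2C_{\theta,l}$. So this factor is at most $\|\matVp\|_{2,\infty}C_{\theta,l}$. This produces the $L-1$ powers of $\|\matVp\|_{2,\infty}$.
\item For the first layer $l=0$, keep the input explicit rather than taking a sup-norm. Expanding row by row, $\|\mathrm{diag}(\matVp\vectheta^{(0)})\gft{\matH}^{(0)}\|_F^2=\sum_i(\bm v_i^\top\vectheta^{(0)})^2\mathcal{E}_0(\lambda_i)$. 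Cauchy--Schwarz bounds this by $C_{\theta,0}^2\sum_i\|\bm v_i\|_2^2\mathcal{E}_0(\lambda_i)$, which is the data-dependent factor.
\end{itemize}

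Chaining these inequalities from $l=L-1$ down to $l=0$ gives the claimed product. Every bound used holds for every admissible parameter choice, so taking the supremum is harmless.

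The steps are all elementary, so the main thing to get right is the bookkeeping. The first layer must be treated by the row-wise energy expansion rather than the sup-norm, which is what makes the bound data-dependent. One must also check that $\tau$ is positively homogeneous in the norm sense, i.e.\ that $\tau(\bm 0)=\bm 0$ together with Lipschitzness gives the linear growth bound. A subtle point to verify is that \Cref{lem:lip_pres} applies on matrices with arbitrary column dimension $d_{l+1}$. This holds because $\matU$ acts column-wise and the Frobenius norm is unitarily invariant.
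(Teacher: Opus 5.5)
Your proposal is correct and follows essentially the same route as the paper's proof: Cauchy--Schwarz with $\mathbb{E}\|\bm g\|_2\le\sqrt n$ reduces the complexity to $\sup\|\gft{\matH}^{(L)}\|_F$; layers $l\ge1$ are then peeled with the bound $\|\mathrm{diag}(\matVp\vectheta^{(l)})\|_2\le\|\matVp\|_{2,\infty}\|\vectheta^{(l)}\|_2$; and the first layer gets the row-wise energy expansion. You also make explicit that $\tau(\bm 0)=\bm 0$, which follows from $\sigma(0)=0$, is what turns Lipschitzness into the growth bound $\|\tau(\bm Z)\|_F\le\alpha\|\bm Z\|_F$; the paper uses this step without stating it.
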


% \begin{proof}
% The full proof is provided in App. \ref{app:proof_of_thm3}. The core strategy is to first bound the FTGC by the Frobenius norm of the final layer's features in the Fourier domain, $\|\gft{\matH}^{(L)}\|_F$. We then unroll the network's recursive propagation rule. For the first layer, we derive a tight, data-dependent bound that captures the interaction between the input signal's spectrum and the filter basis. For all subsequent layers, we use a data-independent worst-case bound based on the maximum amplification of the filter basis. Combining these recursive bounds yields the final result.
% \end{proof}

The full proof is provided in App. \ref{app:proof_of_thm3}. This bound reveals how different factors contribute to the model complexity. The term $\prod_{l=1}^{L-1}(\cdot)$ shows an exponential dependency on the network depth for layers $L-1$ down to $1$. Crucially, the final term, $\left( \sum_{i=1}^n \|\bm{v}_i\|_2^2 \mathcal{E}_0(\lambda_i) \right)^{1/2}$, captures the interaction between the data and the model at the first layer. Here, $\|\bm{v}_i\|_2^2$ measures the potential amplification of the filter basis at frequency $\lambda_i$, while $\mathcal{E}_0(\lambda_i)$ is the signal's energy at that frequency. The bound is minimized when the signal's energy is concentrated at frequencies where the filter basis has a small response magnitude. This provides a key insight: good generalization is associated with spectral filters that avoid excessively amplifying the dominant frequencies of the input signal.

Next, we bound the FTGC for the case when we have no nonlinearities. By avoiding the initial bounding of the complexity by the output norm, we can derive a result that is tighter than the general non-linear bound from Theorem \ref{thm:data_dependent_bound} and which more clearly reveals the influence of network depth on the model's complexity.

We consider an $L$-layer GNN with the non-linear activation $\sigma$ removed, so the propagation rule becomes $\matH^{(l+1)} = g_{\vectheta^{(l)}}(\normadj) \matH^{(l)} \matW^{(l)}$. 

\begin{restatable}[Tighter Complexity Bound for Deep Linear GNNs]{theorem}{tighterbounddeep}
\label{thm:tighter_bound_deep}
Let $f_{\bm{\theta}}$ be an $L$-layer linear spectral GNN with a single output feature. Assume the model parameters are constrained such that $\|\matW^{(l)}\|_2 \le C_{W,l}$, and $\|\vectheta^{(l)}\|_2 \le C_{\theta,l}$ for all layers $l$. The FTGC of the function class $\mathcal{F}$ is bounded by:
\begin{equation} \mathcal{G}_V(\mathcal{F}) \le \frac{1}{n} \left( \prod_{l=0}^{L-1} C_{W,l} C_{\theta,l} \right) \left( \sum_{i=1}^n \|\bm{v}_i\|_2^{2L} \mathcal{E}_0(\lambda_i) \right)^{1/2}, \end{equation}
where $\bm{v}_i$ is the $i$-th row of $\matVp$ and $\mathcal{E}_0(\lambda_i)$ is the input signal spectral energy.
\end{restatable}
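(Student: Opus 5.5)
The idea is to write the linear network in the Fourier domain as a single multilinear form in the parameters, and then bound the supremum by the norm of one Gaussian random tensor, without passing through the output norm.

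\emph{Collapsing the network.} Without $\sigma$, \eqref{eq:fourier_prop_tau} becomes $\gft{\matH}^{(l+1)} = \mathrm{diag}(\matVp\vectheta^{(l)})\,\gft{\matH}^{(l)}\matW^{(l)}$. Left multiplication by a diagonal matrix commutes with right multiplication by the weights. Unrolling therefore gives
\[
\gft{f} = \mathrm{diag}(\bm{s})\,\gft{\matH}^{(0)}\bm{w},
\qquad
s_i=\prod_{l=0}^{L-1}\bm{v}_i^\top\vectheta^{(l)},
\qquad
\bm{w}=\matW^{(0)}\cdots\matW^{(L-1)}\in\reals^{d_0}.
\]
Here $\bm{w}$ is a vector because there is a single output feature. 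Submultiplicativity gives $\|\bm{w}\|_2\le\prod_l C_{W,l}$. By Lemma~\ref{lem:gci} it suffices to bound $\mathcal{G}_V(\gft{\mathcal{F}})$.

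\emph{Tensorizing the filter product.} Let $\bm{h}_i$ denote the $i$-th row of $\gft{\matH}^{(0)}$, so that $\|\bm{h}_i\|_2^2=\mathcal{E}_0(\lambda_i)$. The product of inner products is a single inner product of tensors: $s_i=\langle \bm{v}_i^{\otimes L},\,\vectheta^{(0)}\otimes\cdots\otimes\vectheta^{(L-1)}\rangle$. Hence
\[
\sum_{i=1}^n g_i\gft{f}_i=\Big\langle \sum_{i=1}^n g_i\,\bm{v}_i^{\otimes L}\otimes\bm{h}_i,\;\vectheta^{(0)}\otimes\cdots\otimes\vectheta^{(L-1)}\otimes\bm{w}\Big\rangle .
\]
The second argument has Hilbert--Schmidt norm $\prod_l\|\vectheta^{(l)}\|_2\,\|\bm{w}\|_2\le\prod_l C_{W,l}C_{\theta,l}$. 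Cauchy--Schwarz then gives a bound on the supremum over the whole class by $\prod_l C_{W,l}C_{\theta,l}$ times $\|\bm{T}\|$, where $\bm{T}=\sum_i g_i\,\bm{v}_i^{\otimes L}\otimes\bm{h}_i$ does not depend on the parameters.

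\emph{Expectation.} By Jensen, $\mathbb{E}\|\bm{T}\|\le(\mathbb{E}\|\bm{T}\|^2)^{1/2}$. Since the $g_i$ are independent with unit variance, the cross terms vanish. Using $\|\bm{v}_i^{\otimes L}\otimes\bm{h}_i\|^2=\|\bm{v}_i\|_2^{2L}\|\bm{h}_i\|_2^2$, this yields
\[
\mathbb{E}\|\bm{T}\|\le\Big(\sum_{i=1}^n\|\bm{v}_i\|_2^{2L}\mathcal{E}_0(\lambda_i)\Big)^{1/2}.
\]
Dividing by $n$ gives the claim.

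\emph{Main obstacle.} The only delicate point is the tensorization step. A naive route bounds $|s_i|\le\prod_l C_{\theta,l}\|\bm{v}_i\|_2^L$ coordinatewise and then applies Cauchy--Schwarz in $i$. That route pulls out $\|\bm{g}\|_2$ and loses a factor $\sqrt n$, which is exactly the looseness of Theorem~\ref{thm:data_dependent_bound}. Writing the supremum as the dual norm of a single Gaussian tensor keeps the $1/n$ rate. The step to check carefully is that the norm of a rank-one tensor product equals the product of the factors' norms, so that the parameter constraints bound the second argument in the Cauchy--Schwarz step.
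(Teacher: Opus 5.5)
Your proof is correct and follows essentially the same route as the paper: you collapse the linear network, lift the product of filter responses to $\langle \bm{v}_i^{\otimes L}, \vectheta^{(0)}\otimes\cdots\otimes\vectheta^{(L-1)}\rangle$ (the paper's row-wise Khatri--Rao lifting), bound by the parameter norms, and finish with Jensen and $\mathbb{E}[g_i g_j]=\delta_{ij}$. The only difference is cosmetic: you absorb the weight and filter suprema into a single Cauchy--Schwarz step on $\vectheta^{(0)}\otimes\cdots\otimes\vectheta^{(L-1)}\otimes\bm{w}$ and land directly on the Frobenius norm of $\bm{T}$, whereas the paper first eliminates the weights by duality, then bounds via the operator norm of $\gft{\matH}^{(0)\top}\mathrm{diag}(\bm{g})$ times the lifted Vandermonde matrix, and only then relaxes to that same Frobenius norm.
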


% \begin{proof}
% The proof proceeds by sequentially taking the supremum over the parameter constraints, starting from the final layer, and then bounding the expectation of the resulting random vector's norm.
% \end{proof}

The full proof is in App. \ref{app:proof_of_thm_linear}. This bound provides a more precise characterization of complexity for deep linear models. Compared to the general non-linear bound in (\ref{eq:data_dependent_bound}), it is tighter by a factor of $1/\sqrt{n}$. Most importantly, it clearly shows how the potential for complexity grows with depth. The data-dependent term now contains the factor $\|\bm{v}_i\|_2^{2L}$, indicating that any spectral instability in the polynomial basis (large $\|\bm{v}_i\|_2$ for some frequency $\lambda_i$) is amplified exponentially with the number of layers $L$. This provides a good theoretical justification for choosing spectrally stable polynomial bases (\textit{e.g.}, Chebyshev or Bernstein) when designing deep spectral GNNs, even in the absence of non-linearities.
In App. \ref{app:boundvsgap} we plot our FTGC-based bound from Theorem \ref{thm:data_dependent_bound} against the measured generalization gap across polynomial order, bases, and datasets.

\subsection{Worst-Case Stability via Jacobian Norm}

Our analysis reveals that the same core principles governing the generalization error of a spectral GNN also control its stability. This property, which measures the sensitivity of a model's output to small perturbations in its input, is important for ensuring robustness and preventing issues like exploding gradients during training \citep{bousquet2002stability}. In this section, we formalize this connection by analyzing the network's Jacobian from a worst-case perspective.

A standard way to quantify stability is by bounding the spectral norm of the network's Jacobian \citep{hariri2025returnchebnet, yoshida2017spectral, novak2018sensitivity}. A large Jacobian norm implies that small input changes can be amplified into large output changes. We define the network Jacobian as $\mathcal{J} = \frac{\partial \text{vec}(\matH^{(L)})}{\partial \text{vec}(\matH^{(0)})}$, where $\text{vec}(\cdot)$ vectorizes the feature matrices.

\begin{restatable}[Jacobian Norm Bound]{theorem}{jacobianbound}
\label{thm:jacobian_bound}
Let the GNN be an $L$-layer spectral GNN with an $\alpha$-Lipschitz and continuously differentiable activation $\sigma$ satisfying $\sigma(0)=0$. Under the same parameter constraints as Theorem \ref{thm:data_dependent_bound}, the spectral norm of the network Jacobian $\mathcal{J}$ is bounded by:
\begin{equation} \|\mathcal{J}\|_2 \le \prod_{l=0}^{L-1} \left( \alpha C_{W,l} C_{\theta,l} \|\matVp\|_{2,\infty} \right), \end{equation}
where $\|\matVp\|_{2,\infty} = \max_{i} \|\bm{v}_i\|_2$ is the maximum row-norm of the generalized Vandermonde matrix.
\end{restatable}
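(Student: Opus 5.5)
The plan is to write the network as a composition of layer maps and apply the chain rule. The spectral norm is submultiplicative, so it suffices to bound each layer Jacobian separately. Define the layer map $\Phi_l(\matH) = \sigma(\bm{Z}^{(l)})$ with pre-activation $\bm{Z}^{(l)} = g_{\vectheta^{(l)}}(\normadj)\matH\matW^{(l)}$, so that $\matH^{(L)} = \Phi_{L-1}\circ\cdots\circ\Phi_0(\matH^{(0)})$. Because $\sigma$ is continuously differentiable, each $\Phi_l$ is differentiable. The chain rule then gives
\begin{equation*}
\mathcal{J} = \prod_{l=L-1}^{0} \frac{\partial \text{vec}(\matH^{(l+1)})}{\partial \text{vec}(\matH^{(l)})}, \qquad \|\mathcal{J}\|_2 \le \prod_{l=0}^{L-1} \Big\| \frac{\partial \text{vec}(\matH^{(l+1)})}{\partial \text{vec}(\matH^{(l)})}\Big\|_2 .
\end{equation*}

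For a single layer, the identity $\text{vec}(\bm{A}\bm{X}\bm{B}) = (\bm{B}^\top\otimes\bm{A})\text{vec}(\bm{X})$ gives the factorization
\begin{equation*}
\frac{\partial \text{vec}(\matH^{(l+1)})}{\partial \text{vec}(\matH^{(l)})} = \bm{D}_l\,\big(\matW^{(l)\top}\otimes g_{\vectheta^{(l)}}(\normadj)\big),
\end{equation*}
where $\bm{D}_l = \text{diag}(\sigma'(\text{vec}(\bm{Z}^{(l)})))$. Since $\sigma$ is $C^1$ and $\alpha$-Lipschitz, $|\sigma'|\le\alpha$, so $\|\bm{D}_l\|_2\le\alpha$. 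The spectral norm of a Kronecker product is the product of the spectral norms, so the second factor has norm $\|\matW^{(l)}\|_2\,\|g_{\vectheta^{(l)}}(\normadj)\|_2 \le C_{W,l}\,\|g_{\vectheta^{(l)}}(\normadj)\|_2$. (The assumption $\sigma(0)=0$ is not needed here.)

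The remaining step, and the only one that uses the spectral structure, is to bound the filter norm. By \eqref{eq:fourier_layer}, $g_{\vectheta^{(l)}}(\normadj) = \matU\,\text{diag}(\matVp\vectheta^{(l)})\,\matU^\top$ with $\matU$ orthogonal. Its spectral norm is therefore $\max_i |(\matVp\vectheta^{(l)})_i| = \max_i |\bm{v}_i^\top\vectheta^{(l)}|$. Cauchy–Schwarz then gives the bound $\max_i\|\bm{v}_i\|_2\,\|\vectheta^{(l)}\|_2 \le \|\matVp\|_{2,\infty} C_{\theta,l}$.

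Multiplying the three factors per layer and taking the product over layers yields the stated bound. I do not expect any real obstacle. The points needing care are the Kronecker ordering under column-major vectorization, the fact that $\sigma'$ is evaluated pointwise so the activation Jacobian is diagonal, and the deduction $|\sigma'|\le\alpha$ from the Lipschitz property, which is where continuous differentiability is used.
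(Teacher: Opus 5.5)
Your proposal is correct and follows the paper's proof essentially step for step: the chain rule with submultiplicativity, a per-layer factorization into a diagonal activation-derivative matrix (norm at most $\alpha$) times $\matW^{(l)\top}\otimes(\text{filter})$, multiplicativity of the Kronecker spectral norm, and Cauchy--Schwarz on $\max_i|\langle \bm{v}_i,\vectheta^{(l)}\rangle|$. The only difference is cosmetic. The paper first conjugates the whole Jacobian into the Fourier domain by the unitary $\bm{I}\otimes\matU$, whereas you stay in the node domain and use $g_{\vectheta^{(l)}}(\normadj)=\matU\,\mathrm{diag}(\matVp\vectheta^{(l)})\,\matU^\top$ only to evaluate the filter's spectral norm, which is slightly more direct.
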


% \begin{proof}
% The full proof is in Appendix \ref{app:proof_of_thm4}. The analysis is performed in the Fourier domain, where the Jacobian's norm is identical to that in the spatial domain due to unitary invariance. The chain rule decomposes the total Jacobian into a product of per-layer Jacobians, and we bound the norm of each to arrive at the result.
% \end{proof}

The full proof is in App. \ref{app:proof_of_thm4}. This bound quantifies the worst-case sensitivity of the GNN. It reveals an exponential dependency on depth ($L$) and highlights the central role of $\|\matVp\|_{2,\infty}$, the maximum amplification potential of the polynomial basis. We empirically validate this bound in App.~\ref{app:jacobian_tightness}, demonstrating that it bounds the true spectral norm of the network Jacobian within a small constant factor across varying polynomial orders.

\subsection{A Unified Principle for Architectural Design }
\label{sec:unified_principle_design}

Comparing our generalization bound (Theorem \ref{thm:data_dependent_bound}) and our stability bound (Theorem \ref{thm:jacobian_bound}) reveals a unified principle: the same architectural factors govern both properties. The core components—the activation's Lipschitz constant ($\alpha$), the norms of the weights ($C_{W,l}, C_{\theta,l}$), and the characteristics of the polynomial basis appear in both bounds.

The worst-case stability is controlled by the maximum amplification factor of the basis, $\|\matVp\|_{2,\infty}$. This term also appears in the generalization bound for the deeper layers. However, the generalization bound is more nuanced. For the first layer, it depends on the fine-grained interaction term $\left( \sum_{i=1}^n \|\bm{v}_i\|_2^2 \mathcal{E}_0(\lambda_i) \right)^{1/2}$, which measures how the filter basis amplifies the input signal's actual spectral energy distribution.

This provides a clear design strategy. To ensure a model is both stable and generalizable, one should start by selecting a polynomial basis with a low maximum amplification, $\|\matVp\|_{2,\infty}$. A large $\|\bm{v}_i\|_2$ means the chosen polynomial basis is highly sensitive to that frequency, so any signal energy present there will contribute more significantly to the model's complexity. Good generalization is therefore achieved when the signal's energy is concentrated at frequencies the filter basis considers less important.

This spectral perspective offers a distinct advantage over analyses in the spatial domain \citep{Esser2021LearningTheory}, which often rely on the diffusion operator's norm $\|\normadj\|_{\infty}$ (bounded by $O(\sqrt{degree_{max}/degree_{min}})$) or on the maximum degree \citep{Liao2021PACBayesian}.  As we empirically demonstrate in App.~\ref{app:bound_comparison}, this reliance causes spatial bounds to grow exponentially with network depth. Our approach effectively trades a dependency on the graph's structural irregularity for a dependency on the mathematical stability of the filter basis. For many common architectures like GCN, this trade-off is highly favorable, as a simple polynomial basis can yield a small, constant $\|\matVp\|_{2,\infty}$ (\textit{e.g.}, $1$ for GCN) regardless of the graph's degree disparity, leading to a much tighter guarantee.
By selecting a spectrally stable polynomial basis (low $\|\matVp\|_{2,\infty}$) and designing filters that do not amplify the dominant frequencies of the input signal, we can simultaneously build models that are less prone to overfitting and exploding gradients.

\section{Design Implications: Basis Stability and Oversmoothing}\label{sec:design_implications}

Our theoretical bounds in Section \ref{sec:main} show that the generalization and stability of a spectral GNN depend on the properties of the chosen polynomial basis $\{P_k\}$. This dependency appears through the term $\|\bm{v}_i\|^2_{2}$, which is the squared row norm of the generalized Vandermonde matrix $\matVp$. 

\begin{figure}[t]
\centering
\includegraphics[width=1\linewidth]{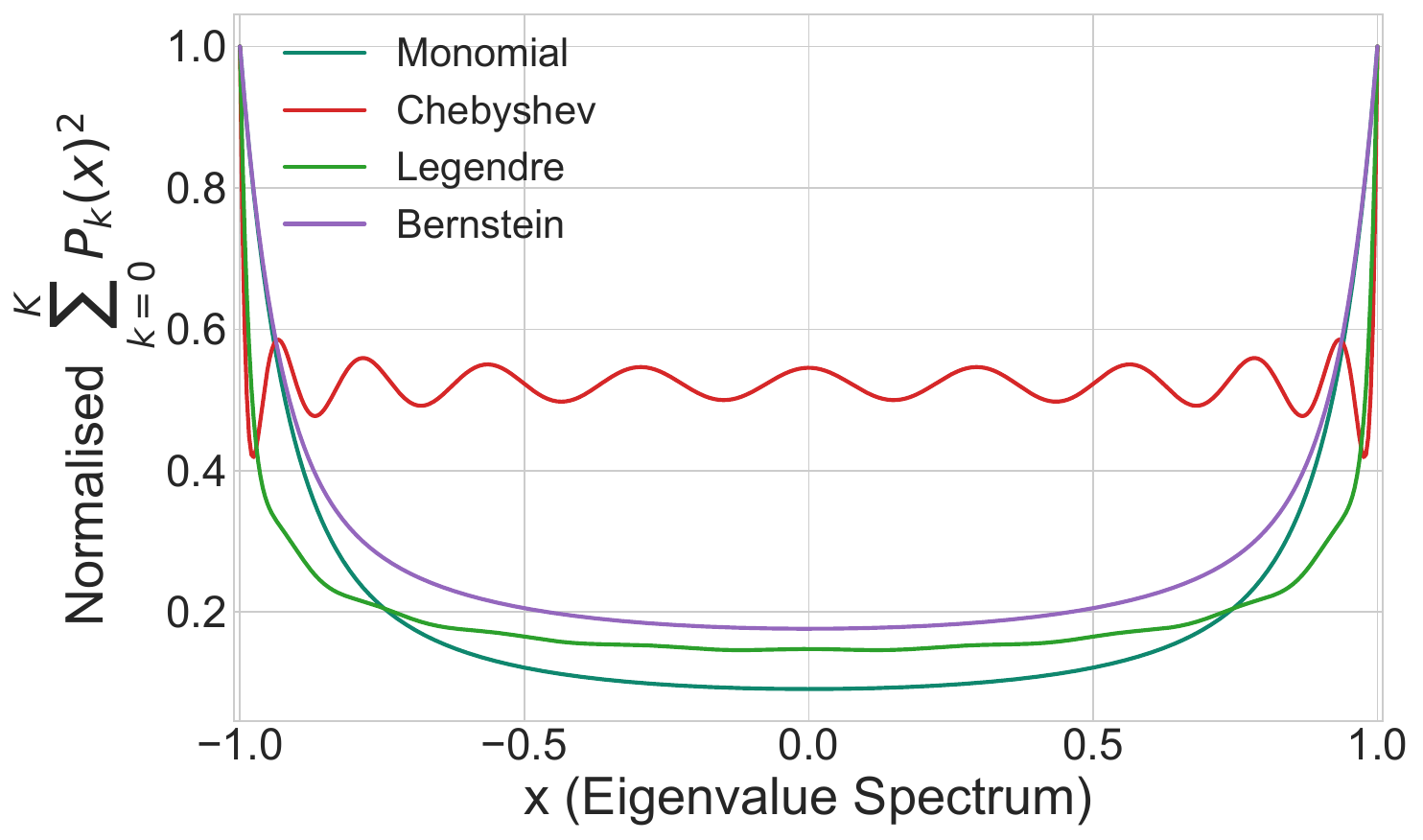}
\caption{Basis stability across the spectrum ($K=10$).}
\label{fig:basis_stability}
\end{figure}

We refer to the function $x\mapsto \sum_{k=0}^{K}P_k(x)^2$ as the amplification profile of the basis, as it shows how sensitive the basis is to different frequencies. When this function is evaluated at the graph eigenvalues $\lambda_i$, its value is exactly equal to $\|\bm{v}_i\|^2_{2}$.
In \Cref{fig:basis_stability}, we plot the amplification profiles for different bases. From the plot, we can group these bases, which are normalized to a maximum value of 1, into three classes based on their {uniformity}:
\begin{enumerate}[leftmargin=0.5cm]
    \setlength\itemsep{-2pt}
    \item The \textit{Chebyshev} basis is the most uniform. Its profile is relatively flat across the middle of the spectrum, with its minimum value (around 0.5) being comparatively close to its maximum value of 1.0.
    
    \item The \textit{Bernstein} and \textit{Legendre} bases have a clear U-shape, making them non-uniform. Their sensitivity is much higher at the spectral edges than in the center.
    
    \item The \textit{Monomial} basis is the least uniform. It exhibits the most extreme U-shape, with the largest comparative difference between its sensitivity at the edges versus the center.
\end{enumerate}

This uniformity has direct consequences for generalization. As shown in our main bound (\Cref{thm:data_dependent_bound}), model complexity depends on the interaction term $\sum_{i} \|\bm{v}_i\|_2^2 \mathcal{E}_0(\lambda_i)$. For a non-uniform basis, the model's complexity becomes highly sensitive to the signal's energy distribution ($\mathcal{E}_0(\lambda_i)$), leading to unpredictable performance. The uniform profile of Chebyshev makes the complexity less dependent on the signal, which should lead to more consistent performance across different graphs.

\begin{table*}[t]
\centering
\caption{\textbf{Test accuracy improvement depends on basis stability.} The column $\Delta$ uses empirically calculated $\rho$  for paired significance ($^*p<0.05, ^{**}p<0.01, ^{***}p<0.001$). While other bases occasionally degrade performance (negative $\Delta$), a consistent positive improvement is observed across all datasets only for the Chebyshev basis.}
\label{tab:accuracy}
\resizebox{\textwidth}{!}{%
\begin{tabular}{l ccc ccc ccc ccc}
\toprule
& \multicolumn{3}{c}{\textbf{Chameleon}} & \multicolumn{3}{c}{\textbf{Squirrel}} & \multicolumn{3}{c}{\textbf{Cora}} & \multicolumn{3}{c}{\textbf{Citeseer}} \\
\cmidrule(lr){2-4} \cmidrule(lr){5-7} \cmidrule(lr){8-10} \cmidrule(lr){11-13}
\textbf{Basis} & \textbf{Base} & \textbf{+Reg} & \textbf{$\Delta$} & \textbf{Base} & \textbf{+Reg} & \textbf{$\Delta$} & \textbf{Base} & \textbf{+Reg} & \textbf{$\Delta$} & \textbf{Base} & \textbf{+Reg} & \textbf{$\Delta$} \\
\midrule
Monomial & $43.68 \pm 2.0$ & $44.28 \pm 1.2$ & $+0.60\uparrow$ & $27.87 \pm 1.2$ & $26.76 \pm 0.8$ & $-1.11^{*}\downarrow$ & $78.33 \pm 1.4$ & $75.34 \pm 1.5$ & $ -2.99^{***}\downarrow$ & $66.15 \pm 1.9$ & $64.30 \pm 2.5$ & $-1.85\downarrow$ \\
Legendre & $41.22 \pm 1.8$ & $42.74 \pm 1.9$ & $+1.52\uparrow$ & $26.32 \pm 1.7$ & $29.15 \pm 1.0$ & $+2.83^{***}\uparrow$ & $79.53 \pm 1.4$ & $79.36 \pm 1.4$ & $-0.17\downarrow$ & $65.35 \pm 1.7$ & $66.77 \pm 2.0$ & $+1.42\uparrow$ \\
Bernstein & $40.58 \pm 1.8$ & $42.23 \pm 1.9$ & $+1.65^{*}\uparrow$ & $26.44 \pm 0.6$ & $31.23 \pm 0.9$ & $+4.79^{***}\uparrow$ & $78.45 \pm 2.0$ & $79.08 \pm 1.7$ & $+0.63\uparrow$ & $65.47 \pm 2.0$ & $65.30 \pm 2.0$ & $-0.17\downarrow$ \\
Chebyshev & {$41.76 \pm 1.7$} & {$43.29 \pm 1.9$} & {$+1.53\uparrow$} & {$27.89 \pm 0.8$} & {$31.20 \pm 0.7$} & {$+3.31^{***}\uparrow$} & {$77.64 \pm 2.0$} & {$78.23 \pm 2.1$} & {$+0.59\uparrow$} & {$65.57 \pm 1.6$} & {$65.88 \pm 1.6$} & {$+0.31\uparrow$} \\
\bottomrule
\end{tabular}
}
\end{table*}

\begin{table*}[t]
\centering
\caption{\textbf{The regularizer's effect on the generalization gap.}  Column $\Delta$ shows the change, where a negative value indicates a reduction in the gap ($^*p<0.05, ^{**}p<0.01, ^{***}p<0.001$). The gap reduction is most consistent and significant on the Chebyshev basis.}
\label{tab:gap}
\resizebox{\textwidth}{!}{%
\begin{tabular}{l ccc ccc ccc ccc}
\toprule
& \multicolumn{3}{c}{\textbf{Chameleon}} & \multicolumn{3}{c}{\textbf{Squirrel}} & \multicolumn{3}{c}{\textbf{Cora}} & \multicolumn{3}{c}{\textbf{Citeseer}} \\
\cmidrule(lr){2-4} \cmidrule(lr){5-7} \cmidrule(lr){8-10} \cmidrule(lr){11-13}
\textbf{Basis} & \textbf{Base} & \textbf{+Reg} & \textbf{$\Delta$} & \textbf{Base} & \textbf{+Reg} & \textbf{$\Delta$} & \textbf{Base} & \textbf{+Reg} & \textbf{$\Delta$} & \textbf{Base} & \textbf{+Reg} & \textbf{$\Delta$} \\
\midrule
Monomial & $4.93 \pm 0.97$ & $1.99 \pm 0.17$ & $-2.94^{***}\downarrow$ & $4.21 \pm 1.43$ & $0.89 \pm 0.3$ & $-3.32^{***}\downarrow$ & $0.75 \pm 0.06$ & $0.63 \pm 0.04$ & $-0.12^{***}\downarrow$ & $1.02 \pm 0.04$ & $1.02 \pm 0.05$ & $+0.00$ \\
Legendre & $4.72 \pm 1.1$ & $2.23 \pm 0.35$ & $-2.49^{***}\downarrow$ & $2.04 \pm 0.66$ & $1.79 \pm 0.07$ & $-0.25\downarrow$ & $0.65 \pm 0.06$ & $0.49 \pm 0.04$ & $-0.16^{***}\downarrow$ & $0.83 \pm 0.06$ & $0.90 \pm 0.03$ & $+0.07^{**}\uparrow$ \\
Bernstein & $2.14 \pm 0.3$ & $3.46 \pm 1.38$ & $+1.32^{*}\uparrow$ & $2.44 \pm 0.55$ & $0.29 \pm 0.18$ & $-2.15^{***}\downarrow$ & $0.64 \pm 0.06$ & $0.54 \pm 0.04$ & $-0.10^{***}\downarrow$ & $1.00 \pm 0.05$ & $0.99 \pm 0.03$ & $-0.01\downarrow$ \\
Chebyshev & {$3.06 \pm 1.0$} & {$2.20 \pm 0.17$} & {$-0.86\downarrow$} & {$2.77 \pm 0.31$} & {$0.09 \pm 0.09$} & {$-2.68^{***}\downarrow$} & {$0.79 \pm 0.07$} & {$0.69 \pm 0.06$} & {$-0.10^{**}\downarrow$} & {$0.99 \pm 0.02$} & {$0.92 \pm 0.06$} & {$-0.07^{*}\downarrow$} \\
\bottomrule
\end{tabular}
}
\end{table*}

Furthermore, this analysis provides insight into the problem of \textit{oversmoothing} \citep{li2018deeper} in deep GNNs. Our bound for deep linear models (\Cref{thm:tighter_bound_deep}) includes the term $\|\bm{v}_i\|_2^{2L}$. As depth $L$ increases, the model's complexity becomes exponentially dominated by the largest values of the amplification profile. For non-uniform bases, this forces the model to focus only on the spectral edges, which is a potential cause of oversmoothing \citep{nt2019revisitinggraphneuralnetworks, oono2020graph}.

\textbf{Practical regularization.}
The first-layer, data-dependent term of our nonlinear bound (Eq. \ref{eq:data_dependent_bound}) suggests penalizing energy-weighted (EW) amplification.
To discourage large gains at frequencies where the input has energy, we use the simple ratio:
\begin{equation}\label{eq:ew_ratio_single}
\mathcal{R}_{\mathrm{EW}}
\;=\;
\frac{\|\mathrm{diag}\!(\matV_{P}\vectheta)\,\gft{\matH}^{(0)}\|_F^{2}}{\|\gft{\matH}^{(0)}\|_F^{2}}
\;=\;
\frac{\|g_{\vectheta}(\normadj) \matH^{(0)}\|_F^{2}}
{\|\matH^{(0)}\|_F^2}.
\end{equation}

In practice, we detach $\matH^{(0)}$ in this penalty term so that the gradient only updates the filter parameters. Training details are provided in Section~\ref{sec:empirical_validation}.

\section{Empirical Validation}
\label{sec:empirical_validation}

\subsection{Experimental Setup}

To test the hypothesis from \Cref{sec:design_implications}, we conduct node classification experiments on several benchmark datasets using the different polynomial bases.

\textbf{Model.} The core of our experimental model consists of a single linear spectral graph layer, as defined in \Cref{sec:unified_framework}. A residual connection is added to the graph layer to improve training stability. To enhance expressive power, the input node features are first processed by an MLP before being passed to the graph layer. The output features are then passed to a linear classifier for node classification. Even with these components, our central graph convolution layer remains simple and linear. This allows us to isolate and observe the effects of the different polynomial bases and our spectral regularizer, which is the primary goal of this experiment.

\textbf{Regularization.} We apply standard dropout and add the regularization term $\lambda_{\text{EW}}\mathcal{R}_{\mathrm{EW}}$ from Eq. (\ref{eq:ew_ratio_single}), with $\lambda_{\text{EW}}$ tuned on the validation set.
\textbf{Evaluation.} We evaluate all models on four representative datasets. For each dataset we use 10 random sparsified splits: 10 labeled nodes per class for training; from the remaining nodes we use 35\% for validation and 65\% for test. Results are reported as mean $\pm$ 95\% confidence interval (CI) over the 10 splits. We report two metrics: test accuracy (\%) and the generalization gap. We define the generalization gap as testing loss minus training loss. For both tables we do two hyperparameter searches before and after adding the regularisation. The full details are in App. \ref{app:hyper}.

\subsection{Results and Analysis}

The results of our experiments are presented in \Cref{tab:accuracy} for test accuracy and \Cref{tab:gap} for the generalization gap. They confirm our analysis from \Cref{sec:design_implications}.

First, for the \textit{highly non-uniform Monomial basis}, the regularizer's effect is poor. As shown in \Cref{tab:accuracy}, it fails to improve test accuracy and often hurts performance significantly (\textit{e.g.}, on Cora and Citeseer).
Second, for the \textit{moderately non-uniform Bernstein and Legendre bases}, the results are mixed. The regularizer sometimes improves accuracy (\textit{e.g.}, Bernstein on Squirrel), but the effect is inconsistent across datasets. For example, Bernstein's performance decreases on Citeseer, and Legendre's effect on Cora is negligible. The gap reduction for these bases, shown in \Cref{tab:gap}, is also unreliable.
Finally, for the \textit{uniform Chebyshev basis}, the results are consistently positive. \Cref{tab:gap} shows that the regularizer reliably reduces the generalization gap, and \Cref{tab:accuracy} shows that this translates directly into a consistent improvement in test accuracy across all datasets.

These results provide clear evidence that the uniformity of a basis's amplification profile is a key predictor of its response to spectrally-aware regularization. The predictable behavior of the uniform Chebyshev basis allows the regularizer to function effectively and consistently.

\section{Limitations}

Our analysis targets the transductive, fixed-graph setting and does not cover inductive generalization to new nodes or graphs, or distribution shift. We study polynomial spectral filters on the normalized adjacency and assume $\alpha$-Lipschitz activations with bounded parameter norms. Attention layers, non-polynomial operators, and other normalizations are outside our scope. Furthermore, while we provide a sharper bound for deep linear networks, our non-linear bound relies on sequential Lipschitz contractions. Applying covering number arguments and Dudley's entropy integral could yield tighter capacity bounds for the non-linear case. Finally, our derived bounds help us understand spectral GNNs, but they are not direct predictors of test accuracy. They explain trends rather than provide point estimates. 

\section{Conclusions}

We presented a unified Fourier-domain analysis of spectral GNNs that leads to new, depth and order-aware generalization bounds.
By showing that FTGC is invariant under the GFT, we derived data-dependent bounds that explicitly capture the interaction between polynomial bases, filter amplification, and the spectral energy of the input.
Our analysis also established tighter results in the linear setting and revealed that the same factors governing generalization also control network stability. 
These insights yield a simple regularizer and a clear design principle: spectrally stable bases and filters that avoid amplifying dominant frequencies improve both generalization and robustness.
Together, our theoretical and empirical findings provide a principled foundation for understanding and guiding the design of spectral GNN architectures.

\section*{Acknowledgements}
Vahan Martirosyan is the recipient of a PhD scholarship from the STIC Doctoral School of Université Paris-Saclay.

\bibliography{aistats_conference} % You would need a references.bib file
%\clearpage
\section*{Checklist}

\begin{enumerate}

  \item For all models and algorithms presented, check if you include:
  \begin{enumerate}
    \item A clear description of the mathematical setting, assumptions, algorithm, and/or model. [Yes — Mathematical setting, assumptions, and model are in \Cref{sec:unified_framework} and \Cref{sec:main}, with notation in \Cref{sec:preliminaries}.]
    \item An analysis of the properties and complexity (time, space, sample size) of any algorithm. [Not Applicable — We do not introduce a new algorithm.]
    \item (Optional) Anonymized source code, with specification of all dependencies, including external libraries. [Yes — The link to the anonymous repository for the code is provided in the Appendix.]
  \end{enumerate}

  \item For any theoretical claim, check if you include:
  \begin{enumerate}
    \item Statements of the full set of assumptions of all theoretical results. [Yes — Assumptions ($\alpha$-Lipschitz $\sigma$, bounded norms) are stated before each result (\Cref{sec:main}).]
    \item Complete proofs of all theoretical results. [Yes — Complete proofs are provided in the Appendix.]
    \item Clear explanations of any assumptions. [Yes — We explain assumptions before theorems and lemmas (\Cref{sec:main}).]     
  \end{enumerate}

  \item For all figures and tables that present empirical results, check if you include:
  \begin{enumerate}
    \item The code, data, and instructions needed to reproduce the main experimental results (either in the supplemental material or as a URL). [Yes — The anonymous repository hosting the code to reproduce the results of this paper is provided in the Appendix.]
    \item All the training details (e.g., data splits, hyperparameters, how they were chosen). [Yes — We provide full details regarding training in the Appendix.]
    \item A clear definition of the specific measure or statistics and error bars (e.g., with respect to the random seed after running experiments multiple times). [Yes — All experiments are repeated across 10 random  splits, with further details reported in Section \ref{sec:empirical_validation}.]
    \item A description of the computing infrastructure used. (e.g., type of GPUs, internal cluster, or cloud provider). [Yes — We report complete details about the computing infrastructure in the Appendix.]
  \end{enumerate}

  \item If you are using existing assets (e.g., code, data, models) or curating/releasing new assets, check if you include:
  \begin{enumerate}
    \item Citations of the creator If your work uses existing assets. [Yes — All the needed references and credits have been explicitly mentioned in our code.]
    \item The license information of the assets, if applicable. [Yes — All license information of the existing assets were properly included.]
    \item New assets either in the supplemental material or as a URL, if applicable. [Yes — The only new assets of our work are the implemented codes for which we report a detailed Readme file in the anonymous repository.]
    \item Information about consent from data providers/curators. [Not Applicable — Since all used datasets are open-access, no permission was needed.]
    \item Discussion of sensible content if applicable, e.g., personally identifiable information or offensive content. [Not Applicable — Since all used datasets are open-access, no permission was needed.]
  \end{enumerate}

  \item If you used crowdsourcing or conducted research with human subjects, check if you include:
  \begin{enumerate}
    \item The full text of instructions given to participants and screenshots. [Not Applicable — Our paper does not involve crowdsourcing nor research with human subjects.]
    \item Descriptions of potential participant risks, with links to Institutional Review Board (IRB) approvals if applicable. [Not Applicable — Our paper does not involve crowdsourcing nor research with human subjects.]
    \item The estimated hourly wage paid to participants and the total amount spent on participant compensation. [Not Applicable — Our paper does not involve crowdsourcing nor research with human subjects.]
  \end{enumerate}

\end{enumerate}

\clearpage
\appendix
\thispagestyle{empty}

\onecolumn

\aistatstitle{Supplementary Materials}

\section*{Appendix Organization and Notation}

This appendix provides proofs for all theoretical claims made in the main paper, presents further empirical analyses that validate our theory, and includes additional details to ensure reproducibility.
\begin{itemize}[leftmargin=0.5cm]
    \setlength\itemsep{-2pt}
    \item \textbf{Section Norms and Products:} We define the vector and matrix norms used throughout the analysis.
    \item \textbf{Section \ref{app:proof_of_trc_gc}:} Proof of Lemma \ref{lem:trc_gc}, relating TRC and FTGC.
    \item \textbf{Section \ref{app:proof_of_gci}:} Proof of Lemma \ref{lem:gci}, showing FTGC invariance under the GFT.
    \item \textbf{Section \ref{app:proof_of_lip_pres}:} Proof of Lemma \ref{lem:lip_pres}, on the Lipschitz preservation of the transformed activation.
    \item \textbf{Section \ref{app:proof_of_thm3}:} Proof of Theorem \ref{thm:data_dependent_bound}, our main data-dependent FTGC bound.
    \item \textbf{Section \ref{app:proof_of_thm_linear}:} Proof of Theorem \ref{thm:tighter_bound_deep}, the tighter FTGC bound for linear models.
    \item \textbf{Section \ref{app:proof_of_thm4}:} Proof of Theorem \ref{thm:jacobian_bound}, the network Jacobian norm bound.
    \item \textbf{Section \ref{app:jacobian_tightness}:} Empirical validation demonstrating the tightness of our Jacobian norm bound.
    \item \textbf{Section \ref{app:boundvsgap}:} Empirical validation of our theoretical bound against the measured generalization gap, including sensitivity analysis and large-scale validation on ogbn-arxiv dataset.
    \item \textbf{Section \ref{app:bound_comparison}:} Quantitative comparison of our FTGC bound against prior spatial transductive bounds.
    \item \textbf{Section \ref{app:fullbasisdef}:} Formal definitions and properties of the polynomial bases discussed.
    \item \textbf{Section \ref{app:hyper}:} Full details on experimental setup, including hyperparameters, datasets, and splits.
\end{itemize}

\subsection*{Norms and Products}
\label{app:notation}
We use the following standard norms and matrix products in our analysis:
\begin{itemize}[leftmargin=0.5cm]
    \setlength\itemsep{-2pt}
    \item \textbf{Vector $l_2$-norm:} For a vector $\bm{x} \in \reals^n$, $\|\bm{x}\|_2 = \sqrt{\sum_{i=1}^n x_i^2}$.
    \item \textbf{Matrix Frobenius norm:} For a matrix $\bm{A} \in \reals^{m \times n}$, the Frobenius norm is $\|\bm{A}\|_F = \sqrt{\sum_{i=1}^m \sum_{j=1}^n A_{ij}^2} = \sqrt{\text{Tr}(\bm{A}^\top \bm{A})}$.
    \item \textbf{Matrix spectral norm:} For a matrix $\bm{A} \in \reals^{m \times n}$, the spectral norm (or operator norm) is induced by the vector $l_2$-norm: $\|\bm{A}\|_2 = \sup_{\|\bm{x}\|_2=1} \|\bm{A}\bm{x}\|_2 = \sigma_{\max}(\bm{A})$, where $\sigma_{\max}(\bm{A})$ is the largest singular value of $\bm{A}$.
    \item \textbf{Matrix $(2,\infty)$-norm:} For a matrix $\bm{A} \in \reals^{m \times n}$ with rows $\bm{a}_1, \dots, \bm{a}_m$, this norm is the maximum $l_2$-norm among all its rows: $\|\bm{A}\|_{2,\infty} = \max_{i=1, \dots, m} \|\bm{a}_i\|_2$.
    \item \textbf{Kronecker product:} For matrices $\bm{A} \in \reals^{m \times n}$ and $\bm{B} \in \reals^{p \times q}$, their Kronecker product $\bm{A} \otimes \bm{B}$ is an $mp \times nq$ block matrix where the $(i,j)$-th block is the $p \times q$ matrix $A_{ij}\bm{B}$.
    \item \textbf{Khatri-Rao product:} For matrices $\bm{A} \in \reals^{m \times k}$ and $\bm{B} \in \reals^{n \times k}$ with the same number of columns, their Khatri-Rao (or column-wise Kronecker) product $\bm{A} \odot \bm{B}$ is an $mn \times k$ matrix where the $j$-th column is the Kronecker product of the $j$-th columns of $\bm{A}$ and $\bm{B}$, i.e., $(\bm{A} \odot \bm{B})_{:,j} = \bm{A}_{:,j} \otimes \bm{B}_{:,j}$. In our proofs, we use a row-wise variant, denoted $\odot_r$.
\end{itemize}

\section{Proof of Lemma \ref{lem:trc_gc} (Relation Between TRC and FTGC)}
\label{app:proof_of_trc_gc}

\trcgc* 

\begin{proof}
The proof connects the two complexity measures by showing that the expectation over the TRC random variables ($\sigma_i$) is bounded by the expectation over standard Gaussian variables ($g_i$). This is done by using standard Rademacher variables ($\epsilon_i \in \{-1, +1\}$) as an intermediate step.

We begin by defining the unnormalized core expectation terms:
\begin{align*}
    E_{TRC} &= \mathbb{E}_{\bm{\sigma}}\left[\sup_{f\in\mathcal{F}} \sum_{i=1}^n \sigma_{i} f_i\right], \\
    E_{GC} &= \mathbb{E}_{\bm{g}}\left[\sup_{f\in\mathcal{F}} \sum_{i=1}^n g_{i} f_i\right],
\end{align*}
where $\bm{\sigma}$ is the TRC random vector and $\bm{g}$ is a vector of i.i.d. $\mathcal{N}(0, 1)$ variables.

A random variable $\sigma_i$ from the TRC definition (value $+1$ or $-1$ with probability $p$, and $0$ with probability $1-2p$) can be constructed as the product $\sigma_i = \delta_i \epsilon_i$. Here, $\epsilon_i$ is a standard Rademacher variable (taking values $+1$ or $-1$ with probability 0.5), and $\delta_i$ is an independent Bernoulli variable (taking value $1$ with probability $2p$ and $0$ otherwise).

Using this construction, we can rewrite $E_{TRC}$ and apply the law of total expectation:
\begin{equation}E_{TRC} = \mathbb{E}_{\bm{\delta}, \bm{\epsilon}}\left[\sup_{f\in\mathcal{F}} \sum_{i=1}^n \delta_i \epsilon_i f_i\right] = \mathbb{E}_{\bm{\delta}}\left[\mathbb{E}_{\bm{\epsilon}}\left[\sup_{f\in\mathcal{F}} \sum_{i=1}^n \epsilon_i (\delta_i f_i)\right]\right].\end{equation}
We now apply the contraction principle of \citet{ledoux1991probability}. For any fixed realization of the Bernoulli vector $\bm{\delta}$, we have $|\delta_i| \leq 1$. The principle states that for such contractions, the expected supremum does not increase:
\begin{equation}\mathbb{E}_{\bm{\epsilon}}\left[\sup_{f\in\mathcal{F}} \sum_{i=1}^n \epsilon_i (\delta_i f_i)\right] \le \mathbb{E}_{\bm{\epsilon}}\left[\sup_{f\in\mathcal{F}} \sum_{i=1}^n \epsilon_i f_i\right].\end{equation}
Since this inequality holds for any specific outcome of $\bm{\delta}$, it also holds when we take the expectation over $\bm{\delta}$. This gives us:
\begin{equation}
\label{eq:TRC_RC}
E_{TRC} \le \mathbb{E}_{\bm{\epsilon}}\left[\sup_{f\in\mathcal{F}} \sum_{i=1}^n \epsilon_i f_i\right]. \end{equation}
Let $\bm{g}$ be a vector of i.i.d. standard Gaussian variables, and let $\bm{\epsilon}$ be an independent Rademacher vector. The distribution of $\bm{g}$ is identical to the distribution of the vector $(|g_1|\epsilon_1, \dots, |g_n|\epsilon_n)$. Using this property:
\begin{equation}E_{GC} = \mathbb{E}_{\bm{g}}\left[\sup_{f\in\mathcal{F}} \sum_{i=1}^n g_i f_i\right] = \mathbb{E}_{|\bm{g}|, \bm{\epsilon}}\left[\sup_{f\in\mathcal{F}} \sum_{i=1}^n |g_i|\epsilon_i f_i\right].\end{equation}
The function $\phi(z_1, \dots, z_n) = \sup_f \sum_i z_i f_i$ is a convex function of its arguments $z_i$. We can therefore apply Jensen's inequality to the expectation over the magnitudes $|\bm{g}|$:
\begin{equation}E_{GC} \ge \mathbb{E}_{\bm{\epsilon}}\left[\sup_{f\in\mathcal{F}} \sum_{i=1}^n \mathbb{E}[|g_i|] \epsilon_i f_i\right].\end{equation}
The expected absolute value of a standard Gaussian variable is a constant: $\mathbb{E}[|g_i|] = \sqrt{2/\pi}$. Substituting this in gives:
\begin{equation}E_{GC} \ge \mathbb{E}_{\bm{\epsilon}}\left[\sup_{f\in\mathcal{F}} \sum_{i=1}^n \sqrt{\frac{2}{\pi}} \epsilon_i f_i\right] = \sqrt{\frac{2}{\pi}} \cdot \mathbb{E}_{\bm{\epsilon}}\left[\sup_{f\in\mathcal{F}} \sum_{i=1}^n \epsilon_i f_i\right].\end{equation}
Rearranging this inequality and combining it with the result from Eq. (\ref{eq:TRC_RC}), we get:
\begin{equation}E_{TRC} \le \mathbb{E}_{\bm{\epsilon}}\left[\sup_{f\in\mathcal{F}} \sum_{i=1}^n \epsilon_i f_i\right] \le \sqrt{\frac{\pi}{2}} \cdot E_{GC}.\end{equation}

We now have the relationship $E_{TRC} \le \sqrt{\pi/2} \cdot E_{GC}$. We relate these unnormalized expectations back to the full complexity measures:
\begin{align*}
    \mathfrak{R}_{m,n}(\mathcal{F}) &= \left(\frac{1}{m}+\frac{1}{u}\right) \cdot E_{TRC} = \frac{n}{mu} \cdot E_{TRC}, \\
    \mathcal{G}_V(\mathcal{F}) &= \frac{1}{n} \cdot E_{GC} \implies E_{GC} = n \cdot \mathcal{G}_V(\mathcal{F}).
\end{align*}
Substituting these into our combined inequality:
\begin{equation}\frac{mu}{n} \mathfrak{R}_{m,n}(\mathcal{F}) \le \sqrt{\frac{\pi}{2}} \cdot \left( n \cdot \mathcal{G}_V(\mathcal{F}) \right).\end{equation}
Finally, solving for $\mathfrak{R}_{m,n}(\mathcal{F})$ yields the desired result:
\begin{equation}\mathfrak{R}_{m,n}(\mathcal{F}) \le \frac{n^2}{mu} \sqrt{\frac{\pi}{2}} \cdot \mathcal{G}_V(\mathcal{F}).\end{equation}
This completes the proof.
\end{proof}

\section{Proof of Lemma \ref{lem:gci} (FTGC Invariance)}
\label{app:proof_of_gci}

\gci*

\begin{proof}
The FTGC over the entire vertex set $V$ of size $n$ is defined as:
\begin{equation} \mathcal{G}_V(\mathcal{F}) = \mathbb{E}_{\bm{g}} \left[ \sup_{f \in \mathcal{F}} \frac{1}{n} \sum_{i=1}^n g_i (f(\matH^{(0)}))_i \right], \end{equation}
where $g_1, \dots, g_n \sim \mathcal{N}(0,1)$ are i.i.d. standard Gaussian variables. Let $\bm{g} \in \reals^n$ be the vector of these variables. Let $\hat{\bm{y}} = f(\matH^{(0)})$ be an output from the function class $\mathcal{F}$, and let $\gft{\hat{\bm{y}}} = \matU^\top \hat{\bm{y}}$ be its representation in the Fourier domain, which belongs to the class $\gft{\mathcal{F}}$. The proof relies on the rotational invariance of the standard multivariate Gaussian distribution.
\begin{align*}
n \cdot \mathcal{G}_V(\mathcal{F}) &= \mathbb{E}_{\bm{g}} \left[ \sup_{\hat{\bm{y}} \in \mathcal{F}} \langle \bm{g}, \hat{\bm{y}} \rangle \right] \\
&= \mathbb{E}_{\bm{g}} \left[ \sup_{\gft{\hat{\bm{y}}} \in \gft{\mathcal{F}}} \langle \bm{g}, \matU \gft{\hat{\bm{y}}} \rangle \right] \\
&= \mathbb{E}_{\bm{g}} \left[ \sup_{\gft{\hat{\bm{y}}} \in \gft{\mathcal{F}}} \langle \matU^\top \bm{g}, \gft{\hat{\bm{y}}} \rangle \right] \\
&= \mathbb{E}_{\bm{g}'} \left[ \sup_{\gft{\hat{\bm{y}}} \in \gft{\mathcal{F}}} \langle \bm{g}', \gft{\hat{\bm{y}}} \rangle \right] && \text{Let } \bm{g}' = \matU^\top \bm{g}. \text{ Since } \matU \text{ is orthonormal, } \bm{g}' \stackrel{d}{=} \bm{g}. \\
&= n \cdot \mathcal{G}_V(\gft{\mathcal{F}}).
\end{align*}
Therefore, it follows that $\mathcal{G}_V(\mathcal{F}) = \mathcal{G}_V(\gft{\mathcal{F}})$.
\end{proof}

\section{Proof of Lemma \ref{lem:lip_pres} (Lipschitz Preservation of Transformed Activation)}
\label{app:proof_of_lip_pres}

\lippres*

\begin{proof}
We show that for any matrices $\bm{X}, \bm{Y}$ of the same dimensions, the function $\tau$ satisfies $\|\tau(\bm{X}) - \tau(\bm{Y})\|_F \leq \alpha \|\bm{X} - \bm{Y}\|_F$.
\begin{align*}
    \|\tau(\bm{X}) - \tau(\bm{Y})\|_F &= \|\matU^\top \sigma(\matU\bm{X}) - \matU^\top \sigma(\matU\bm{Y})\|_F \\
    &= \|\matU^\top (\sigma(\matU\bm{X}) - \sigma(\matU\bm{Y}))\|_F \\
    &= \|\sigma(\matU\bm{X}) - \sigma(\matU\bm{Y})\|_F && \text{Unitary invariance of } \|\cdot\|_F \\
    &\leq \alpha \|\matU\bm{X} - \matU\bm{Y}\|_F && \sigma \text{ is } \alpha\text{-Lipschitz element-wise} \\
    &= \alpha \|\matU(\bm{X} - \bm{Y})\|_F \\
    &= \alpha \|\bm{X} - \bm{Y}\|_F. && \text{Unitary invariance of } \|\cdot\|_F
\end{align*}
This completes the proof.
\end{proof}

\section{Proof of Theorem \ref{thm:data_dependent_bound}}
\label{app:proof_of_thm3}

\datadependentbound*

\begin{proof}
The proof proceeds by bounding the Frobenius norm of the final layer's features in the Fourier domain, $\|\gft{\bm{h}}^{(L)}\|_2$, and then relating this quantity to the Gaussian complexity.

First, we connect the Gaussian complexity to the norm of the output features. For a function class producing matrices, a standard result from statistical learning theory bounds the complexity by the expected correlation with a random Gaussian matrix $\bm{g} \in \reals^{n}$ \citep{shalev2014understanding}. Using the Cauchy-Schwarz inequality and the fact that $\mathbb{E}[\|\bm{g}\|_2] \le \sqrt{n}$, we get:
\begin{equation} \mathcal{G}_V(\mathcal{F}) \le \frac{1}{n} \mathbb{E}[\|\bm{g}\|_2] \sup_{\bm{\theta}} \|\gft{\bm{h}}^{(L)}\|_2 \le \frac{1}{\sqrt{n}} \sup_{\bm{\theta}} \|\gft{\bm{h}}^{(L)}\|_2. \end{equation}
Our task now is to bound $\sup_{\bm{\theta}} \|\gft{\bm{h}}^{(L)}\|_2$ by unrolling the network's recursion from  Eq. (\ref{eq:fourier_prop_tau}). For any intermediate layer $l \in \{1, \dots, L-1\}$, we can derive a general, data-independent bound. Starting with the propagation rule and using the $\alpha$-Lipschitz property of $\tau$ (Lemma 2), we have:
\begin{equation} \|\gft{\matH}^{(l+1)}\|_F = \|\tau(\text{diag}(\matVp \vectheta^{(l)}) \gft{\matH}^{(l)} \matW^{(l)})\|_F \le \alpha \| \text{diag}(\matVp \vectheta^{(l)}) \gft{\matH}^{(l)} \matW^{(l)} \|_F. \end{equation}
Using the submultiplicative property of matrix norms ($ \|ABC\|_F \le \|A\|_2 \|B\|_F \|C\|_2 $) \citep{horn2012matrix}, this becomes:
\begin{equation} \|\gft{\matH}^{(l+1)}\|_F \le \alpha \|\text{diag}(\matVp \vectheta^{(l)})\|_2 \|\gft{\matH}^{(l)}\|_F \|\matW^{(l)}\|_2. \end{equation}
The spectral norm of the diagonal matrix is its largest absolute entry. We can bound this using the Cauchy-Schwarz inequality: $\|\text{diag}(\matVp \vectheta^{(l)})\|_2 = \max_i |\langle \bm{v}_i, \vectheta^{(l)} \rangle| \le (\max_i \|\bm{v}_i\|_2) \|\vectheta^{(l)}\|_2 = \|\matVp\|_{2,\infty} \|\vectheta^{(l)}\|_2$. This gives the recursive bound for intermediate layers:
\begin{equation} \|\gft{\matH}^{(l+1)}\|_F \le \alpha \|\matW^{(l)}\|_2 \|\vectheta^{(l)}\|_2 \|\matVp\|_{2,\infty} \|\gft{\matH}^{(l)}\|_F. \end{equation}
Unrolling this recursion from $l=L-1$ down to $l=1$ gives:
\begin{equation} \|\gft{\matH}^{(L)}\|_F \le \left( \prod_{l=1}^{L-1} \alpha \|\matW^{(l)}\|_2 \|\vectheta^{(l)}\|_2 \|\matVp\|_{2,\infty} \right) \|\gft{\matH}^{(1)}\|_F. \end{equation}
For the first layer ($l=0$), we derive a tighter, data-dependent bound for $\|\gft{\matH}^{(1)}\|_F$. Following the same initial steps:
\begin{equation} \|\gft{\matH}^{(1)}\|_F \le \alpha \| \text{diag}(\matVp \vectheta^{(0)}) \gft{\matH}^{(0)} \matW^{(0)} \|_F \le \alpha \|\matW^{(0)}\|_2 \| \text{diag}(\matVp \vectheta^{(0)}) \gft{\matH}^{(0)} \|_F. \end{equation}
We now analyze the squared norm of the filtered signal term by expanding it:
\begin{align*}
\|\text{diag}(\matVp \vectheta^{(0)}) \gft{\matH}^{(0)}\|_F^2 &= \sum_{i=1}^n \sum_{j=1}^{d_0} |(\text{diag}(\matVp \vectheta^{(0)}))_{ii} (\gft{\matH}^{(0)})_{ij}|^2 \\
&= \sum_{i=1}^n |(\matVp \vectheta^{(0)})_i|^2 \sum_{j=1}^{d_0} |(\gft{\matH}^{(0)})_{ij}|^2 \\
&= \sum_{i=1}^n |\langle \bm{v}_i, \vectheta^{(0)} \rangle|^2 \|(\gft{\matH}^{(0)})_i\|_2^2 \\
&\le \sum_{i=1}^n (\|\bm{v}_i\|_2^2 \|\vectheta^{(0)}\|_2^2) \mathcal{E}_0(\lambda_i) && \text{(by Cauchy-Schwarz)} \\
&= \|\vectheta^{(0)}\|_2^2 \sum_{i=1}^n \|\bm{v}_i\|_2^2 \mathcal{E}_0(\lambda_i).
\end{align*}
Taking the square root provides the bound on the norm of the feature matrix after the first layer:
\begin{equation} \|\gft{\matH}^{(1)}\|_F \le \alpha \|\matW^{(0)}\|_2 \|\vectheta^{(0)}\|_2 \left( \sum_{i=1}^n \|\bm{v}_i\|_2^2 \mathcal{E}_0(\lambda_i) \right)^{1/2}. \end{equation}
Finally, substituting this data-dependent bound into the recursive inequality for $\|\gft{\matH}^{(L)}\|_F$, applying the parameter constraints ($C_{W,l}, C_{\theta,l}$) and plugging the result into the initial inequality for $\mathcal{G}_V(\mathcal{F})$ completes the proof.
\end{proof}

\section{Proof of Theorem \ref{thm:tighter_bound_deep}}
\label{app:proof_of_thm_linear}

\tighterbounddeep*

\begin{proof}
We work in the graph Fourier domain throughout. Recall that the (linear) $L$-layer model is
\[
\widehat{\bm H}^{(l+1)} \;=\; \bm{D}_l\,\widehat{\bm H}^{(l)}\,\bm W^{(l)},
\qquad
\bm{D}_l \;=\; \mathrm{diag}\!\big(\matV_P\,\vectheta^{(l)}\big),
\]
with $\widehat{\bm H}^{(0)}\in\mathbb{R}^{n\times d_0}$, $\bm W^{(l)}\in\mathbb{R}^{d_l\times d_{l+1}}$,
and $d_L=1$. Let $\bm{v}_i^\top$ denote the $i$-th row of $\matV_P$ and
$\mathcal{E}_0(\lambda_i) := \|(\widehat{\bm H}^{(0)})_i\|_2^2$ the input spectral energy at frequency $\lambda_i$.

Unrolling gives $\widehat{\bm h}^{(L)}=\big(\prod_{l=0}^{L-1}\bm{D}_l\big)\widehat{\bm H}^{(0)}\big(\prod_{l=0}^{L-1}\bm W^{(l)}\big)$.
By spectral-norm duality (applied layer-by-layer),
\begin{equation}
\sup_{\{\bm W^{(l)}:\,\|\bm W^{(l)}\|_2\le C_{W,l}\}}
\!\!\langle\bm g,\widehat{\bm h}^{(L)}\rangle
\;=\;
\Big(\prod_{l=0}^{L-1}C_{W,l}\Big)\,
\big\|\widehat{\bm H}^{(0)\!\top}\big(\textstyle\prod_{l=0}^{L-1}\bm{D}_l\big)\bm g\big\|_2 .
\label{eq:elimW}
\end{equation}
Hence, by the FTGC definition,
\begin{equation}
\mathcal{G}_V(\mathcal{F})
\;\le\;
\frac{1}{n}\Big(\prod_{l=0}^{L-1}C_{W,l}\Big)\,
\mathbb{E}_{\bm g}\Big[
\sup_{\{\bm\theta^{(l)}:\,\|\bm\theta^{(l)}\|_2\le C_{\theta,l}\}}
\big\|\widehat{\bm H}^{(0)\!\top}\big(\textstyle\prod_{l=0}^{L-1}\bm{D}_l\big)\bm g\big\|_2
\Big].
\label{eq:afterW}
\end{equation}

Next, we write the product of diagonals as a single diagonal via row-wise Kronecker lifting.
For each frequency $i$, we have
\[
\big(\textstyle\prod_{l=0}^{L-1} \bm{D}_l\big)_{ii}
\;=\;\prod_{l=0}^{L-1} \langle \bm{v}_i,\vectheta^{(l)}\rangle
\;=\; \big\langle \bm{v}_i^{\otimes L},\,\vectheta^{(L-1)}\otimes\cdots\otimes\vectheta^{(0)}\big\rangle.
\]
Let
\[
\Theta \;:=\; \vectheta^{(L-1)}\otimes\cdots\otimes\vectheta^{(0)}\in\mathbb{R}^{(K+1)^L},
\qquad
\matV_{\!\otimes}
\;:=\; \underbrace{\matV_P \odot_r \matV_P \odot_r \cdots \odot_r \matV_P}_{L\text{ times}}
\in\mathbb{R}^{n\times (K+1)^L},
\]
where $\odot_r$ denotes the row-wise Khatri–Rao/Kronecker \citep{khatrikolda2009} product so that the $i$-th row of $\matV_{\!\otimes}$ equals $\bm{v}_i^{\otimes L}$ and $\|\bm{v}_i^{\otimes L}\|_2 =\|\bm{v}_i\|^L_2$. Then
\[
\prod_{l=0}^{L-1} \bm{D}_l \;=\; \mathrm{diag}(\matV_{\!\otimes}\Theta).
\]
Thus, for each fixed $\bm g$,
\[
\sup_{\{\vectheta^{(l)}\}}
\big\|\widehat{\bm H}^{(0)\top}\big(\prod_{l=0}^{L-1}\bm{D}_l\big)\bm g\big\|_2
\;=\;
\sup_{\{\vectheta^{(l)}\}}
\big\|\widehat{\bm H}^{(0)\top}\mathrm{diag}(\bm g)\,\matV_{\!\otimes}\Theta\big\|_2
\;\le\;
\Big(\prod_{l=0}^{L-1}C_{\theta,l}\Big)\;
\big\|\widehat{\bm H}^{(0)\top}\mathrm{diag}(\bm g)\,\matV_{\!\otimes}\big\|_2,
\]
since $\|\Theta\|_2=\prod_{l=0}^{L-1}\|\vectheta^{(l)}\|_2\le\prod_l C_{\theta,l}$.

Taking expectation in Eq. (\ref{eq:afterW}) gives
\begin{equation}
\mathcal{G}_V(\mathcal{F})
\;\le\;
\frac{1}{n}\Big(\prod_{l=0}^{L-1} C_{W,l}C_{\theta,l}\Big)\,
\mathbb{E}_{\bm g}\Big[
\big\|\widehat{\bm H}^{(0)\top}\mathrm{diag}(\bm g)\,\matV_{\!\otimes}\big\|_2
\Big].
\label{eq:preFrob}
\end{equation}

By using $\|\bm{X}\|_2\le \|\bm{X}\|_F$ and Jensen/Cauchy--Schwarz:
\[
\mathbb{E}_{\bm g}\big[\|\widehat{\bm H}^{(0)\top}\mathrm{diag}(\bm g)\,\matV_{\!\otimes}\|_2\big]
\;\le\;
\sqrt{\,\mathbb{E}_{\bm g}\big[\|\widehat{\bm H}^{(0)\top}\mathrm{diag}(\bm g)\,\matV_{\!\otimes}\|_F^2\big]\,}.
\]
Let $\bm{Q}:=\widehat{\bm H}^{(0)}\widehat{\bm H}^{(0)\top}$.
Then
\[
\|\widehat{\bm H}^{(0)\top}\mathrm{diag}(\bm g)\,\matV_{\!\otimes}\|_F^2
=
\mathrm{Tr}\!\Big(\matV_{\!\otimes}^{\!\top}\mathrm{diag}(\bm g)\,\bm{Q}\,\mathrm{diag}(\bm g)\,\matV_{\!\otimes}\Big).
\]
Since $\mathbb{E}[g_i g_j]=\delta_{ij}$,
\[
\mathbb{E}\big[\mathrm{diag}(\bm g)\,\bm{Q}\,\mathrm{diag}(\bm g)\big]
=
\mathrm{diag}(\bm{Q}).
\]
Therefore
\begin{equation}
\mathbb{E}_{\bm g}\|\widehat{\bm H}^{(0)\top}\mathrm{diag}(\bm g)\,\matV_{\!\otimes}\|_F^2
=
\mathrm{Tr}\!\Big(\matV_{\!\otimes}^{\!\top}\mathrm{diag}(\bm{Q})\,\matV_{\!\otimes}\Big)
=
\sum_{i=1}^n \bm{Q}_{ii}\;\big\|\big(\matV_{\!\otimes}\big)_{i,:}\big\|_2^2
\;=\;
\sum_{i=1}^n \|(\widehat{\bm H}^{(0)})_i\|_2^2 \;\|\bm{v}_i^{\otimes L}\|_2^2.
\end{equation}
We have $\|(\widehat{\bm H}^{(0)})_i\|_2^2=\mathcal{E}_0(\lambda_i)$ and
$\|\bm{v}_i^{\otimes L}\|_2^2=\|\bm{v}_i\|_2^{2L}$, so
\[
\mathbb{E}_{\bm g}\|\widehat{\bm H}^{(0)\top}\mathrm{diag}(\bm g)\,\matV_{\!\otimes}\|_F^2
\;=\;
\sum_{i=1}^n \|\bm{v}_i\|_2^{2L}\,\mathcal{E}_0(\lambda_i).
\]
Combining with Eq. (\ref{eq:preFrob}) yields
\[
\mathcal{G}_V(\mathcal{F})
\;\le\;
\frac{1}{n}\Big(\prod_{l=0}^{L-1} C_{W,l}C_{\theta,l}\Big)\,
\left(\sum_{i=1}^n \|\bm{v}_i\|_2^{2L}\,\mathcal{E}_0(\lambda_i)\right)^{1/2}.
\]
\end{proof}

\section{Proof of Theorem \ref{thm:jacobian_bound}}
\label{app:proof_of_thm4}

\jacobianbound*

\begin{proof}
The proof proceeds by analyzing the Jacobian in the graph Fourier domain, where the graph convolution operation simplifies to an element-wise product, and then combining bounds for each layer using the chain rule.

Our first step is to move the analysis to the graph Fourier domain. The Jacobian in the Fourier domain is defined as $\gft{\mathcal{J}} = \frac{\partial \text{vec}(\gft{\matH}^{(L)})}{\partial \text{vec}(\gft{\matH}^{(0)})}$. The vectorized features in the two domains are related by the linear transformation $\text{vec}(\gft{\matH}^{(l)}) = (\bm{I} \otimes \matU^\top) \text{vec}(\matH^{(l)})$, where $\matU$ is the orthogonal matrix of normalized Adjacency eigenvectors. By the multivariate chain rule, the Jacobians are related by a similarity transformation:
\begin{equation} \gft{\mathcal{J}} = (\bm{I} \otimes \matU^\top) \mathcal{J} (\bm{I} \otimes \matU). \end{equation}
Since $\matU$ is an orthogonal (and thus unitary) matrix, the Kronecker product $(\bm{I} \otimes \matU)$ is also unitary. A fundamental property of the spectral norm is its unitary invariance, meaning $\|\bm{V}^H \bm{A} \bm{V}\|_2 = \|\bm{A}\|_2$. for any unitary matrix $\bm{V}$. Therefore, the spectral norms of the Jacobians in the two domains are identical:
\begin{equation} \|\mathcal{J}\|_2 = \|\gft{\mathcal{J}}\|_2 \end{equation}
This equivalence allows us to derive the bound in the Fourier domain without loss of generality.

By the chain rule, the full Jacobian can be expressed as the product of the Jacobians of individual layers:
\begin{equation} \gft{\mathcal{J}} = \prod_{l=0}^{L-1} \gft{\mathcal{J}}^{(l)} \quad \text{where} \quad \gft{\mathcal{J}}^{(l)} = \frac{\partial \text{vec}(\gft{\matH}^{(l+1)})}{\partial \text{vec}(\gft{\matH}^{(l)})}. \end{equation}
Using the submultiplicative property of the spectral norm, we can bound the total norm by the product of the individual layer norms: $\|\gft{\mathcal{J}}\|_2 \le \prod_{l=0}^{L - 1} \|\gft{\mathcal{J}}^{(l)}\|_2$.

We now bound the norm $\|\gft{\mathcal{J}}^{(l)}\|_2$ for a single layer $l$. Let $\gft{S}^{(l)} = \text{diag}(\matVp \vectheta^{(l)}) \gft{\matH}^{(l)} \matW^{(l)}$ be the pre-activation matrix in the Fourier domain. The propagation rule is $\gft{\matH}^{(l+1)} = \tau(\gft{S}^{(l)}) = \matU^\top \sigma(\matU \gft{S}^{(l)})$. The layer-wise Jacobian is given by:
\begin{equation} \gft{\mathcal{J}}^{(l)} = \underbrace{\frac{\partial \text{vec}(\tau(\gft{S}^{(l)}))}{\partial \text{vec}(\gft{S}^{(l)})}}_{J_\tau} \underbrace{\frac{\partial \text{vec}(\gft{S}^{(l)})}{\partial \text{vec}(\gft{\matH}^{(l)})}}_{J_S}. \end{equation}
The second term, $J_S$, is the Jacobian of a linear transformation, which evaluates to the Kronecker product $J_S = (\matW^{(l)T} \otimes \text{diag}(\matVp \vectheta^{(l)}))$, since $\text{vec}(\gft{S}^{(l)}) = \left((\matW^{(l)})^\top \otimes \text{diag}(\matVp \vectheta^{(l)})\right) \text{vec}(\gft{\matH}^{(l)})$. The first term, $J_\tau$, is the Jacobian of the transformed activation, which is $J_\tau = (\bm{I} \otimes \matU^\top) \text{diag}(\text{vec}(\sigma'(\matU \gft{S}^{(l)}))) (\bm{I} \otimes \matU)$.
We can now bound the norm of the product:
\begin{align*}
    \|\gft{\mathcal{J}}^{(l)}\|_2 &\le \|J_\tau\|_2 \cdot \|J_S\|_2 \\
    &= \|\text{diag}(\text{vec}(\sigma'(\matU \gft{S}^{(l)})))\|_2 \cdot \|\matW^{(l)T} \otimes \text{diag}(\matVp \vectheta^{(l)})\|_2 && \text{(Unitary invariance of } J_\tau\text{)} \\
    &= \left( \max_{i,j}|\sigma'((\matU \gft{S}^{(l)})_{ij})| \right) \cdot \|\matW^{(l)}\|_2 \cdot \|\text{diag}(\matVp \vectheta^{(l)})\|_2. && \text{(Norm of diag and Kronecker)}
\end{align*}
Since $\sigma$ is $\alpha$-Lipschitz, its derivative is bounded by $\alpha$. The norm of the diagonal filter matrix is bounded by its largest entry, which via Cauchy-Schwarz is $|\langle \bm{v}_i, \vectheta^{(l)} \rangle| \le \|\bm{v}_i\|_2 \|\vectheta^{(l)}\|_2$. This gives $\|\text{diag}(\matVp \vectheta^{(l)})\|_2 \le \|\vectheta^{(l)}\|_2 \|\matVp\|_{2,\infty}$. Substituting these bounds yields:
\begin{equation} \|\gft{\mathcal{J}}^{(l)}\|_2 \le \alpha \|\matW^{(l)}\|_2 \|\vectheta^{(l)}\|_2 \|\matVp\|_{2,\infty}. \end{equation}

Finally, we substitute the per-layer bound into the product from Step 2:
\begin{equation} \|\mathcal{J}\|_2 = \|\gft{\mathcal{J}}\|_2 \le \prod_{l=0}^{L-1} \|\gft{\mathcal{J}}^{(l)}\|_2 \le  \prod_{l=0}^{L-1} \left( \alpha \|\matW^{(l)}\|_2 \|\vectheta^{(l)}\|_2 \|\matVp\|_{2,\infty} \right). \end{equation}
Applying the parameter constraints $\|\matW^{(l)}\|_2 \le C_{W,l}$ and $\|\vectheta^{(l)}\|_2 \le C_{\theta,l}$ completes the proof.
\end{proof}

\section{Empirical Tightness of the Jacobian Bound}
\label{app:jacobian_tightness}

To empirically evaluate the tightness of the stability bound derived in Theorem~\ref{thm:jacobian_bound}, we computed both our theoretical upper bound and the true spectral norm of the network Jacobian. 

We evaluated a 2-layer nonlinear model utilizing the Monomial basis across varying polynomial orders ($K \in [1, 10]$) on both the Cora and Chameleon datasets. The true spectral norm of the network Jacobian was computed exactly using power iteration via automatic differentiation (to avoid materializing the full matrix in memory). This empirical true norm was then compared directly against our theoretical bound given by $\prod_{l=0}^{L-1} \left( \alpha C_{W,l} C_{\theta,l} \|\matVp\|_{2,\infty} \right)$.

As illustrated in Figure~\ref{fig:jacobian_tightness}, our theoretical bound tracks the true Jacobian norm reliably. On the Cora dataset, the bound is exceptionally tight, remaining within a small factor (1.0x to 2.0x) of the true norm, and it correctly identifies the location of the instability spike at $K=7$. On the Chameleon dataset, it successfully bounds the true norm within a factor of 1.3x to 4.0x. While the bound slightly overestimates the magnitude of instability at certain orders on Chameleon, it provides a tight measure of the model's worst-case sensitivity.

\begin{figure}[htbp]
    \centering
    \begin{subfigure}{0.48\textwidth}
        \centering
        \includegraphics[width=\linewidth]{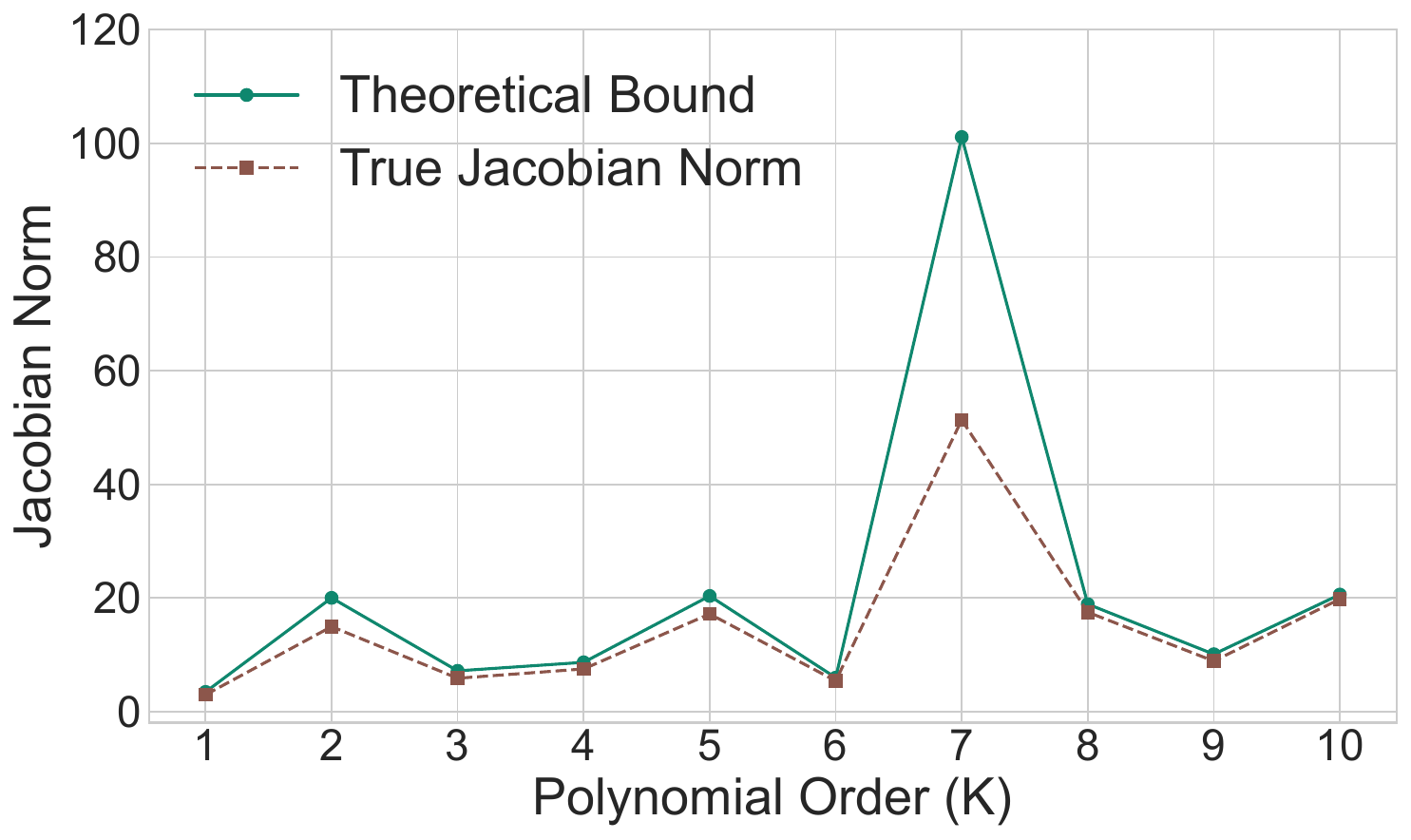}
        \caption{Cora}
    \end{subfigure}\hfill
    \begin{subfigure}{0.48\textwidth}
        \centering
        \includegraphics[width=\linewidth]{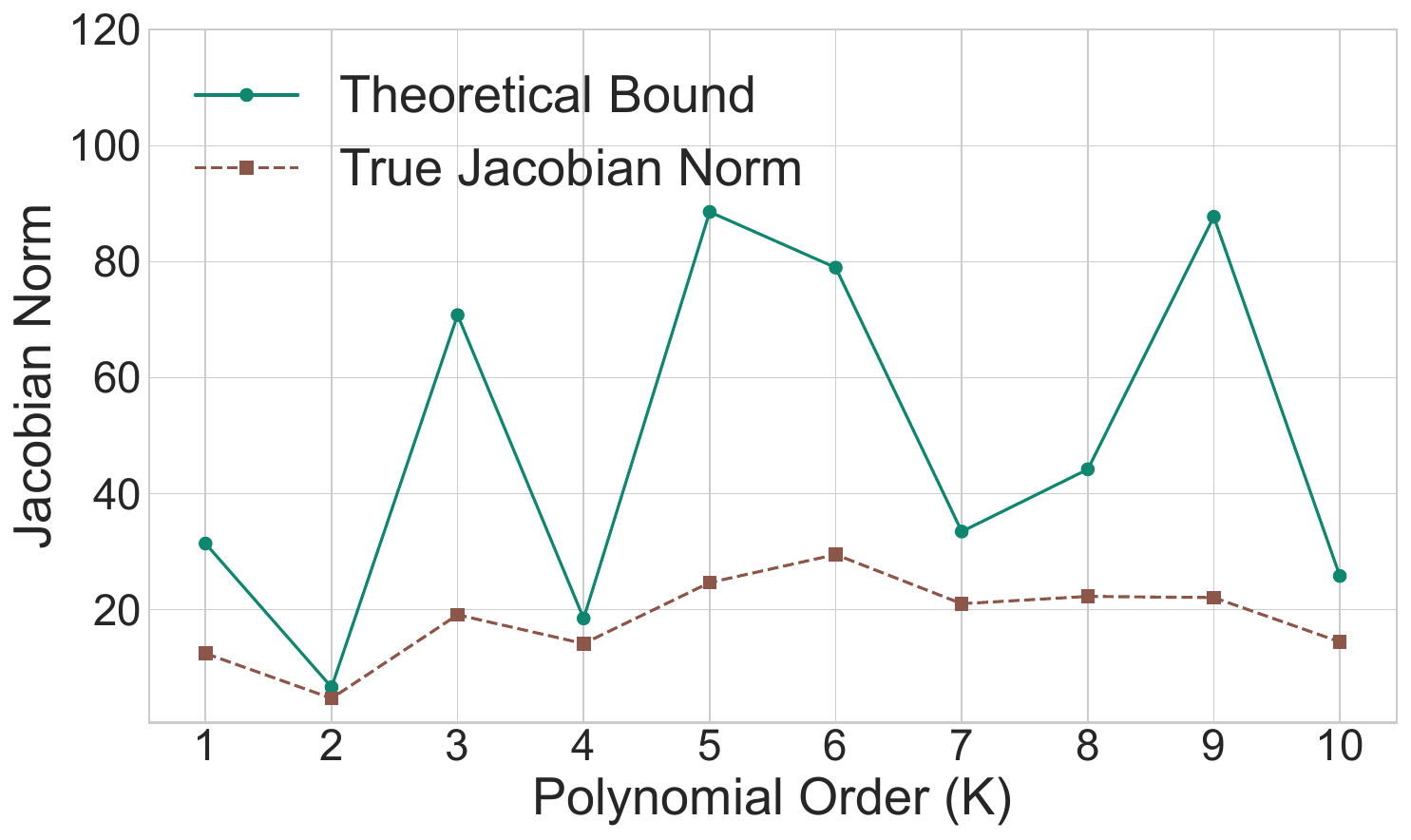}
        \caption{Chameleon}
    \end{subfigure}
    \caption{Jacobian bound tightness.}
    \label{fig:jacobian_tightness}
\end{figure}

\section{Correlation between the Theoretical Bound and the Generalization Gap}
\label{app:boundvsgap}

To compare theory and measurements under controlled conditions, we keep the architecture minimal and the training setup fixed across all plots. Unless noted, we use the spectral layer in Eq. (\ref{eq:gnn_layer}) \emph{without} activations for the linear setting (Section~\ref{sec:main}) and the standard model \emph{with} activations for the nonlinear setting. The hyperparameters are the following: learning rate $0.01$, weight decay $10^{-5}$, maximum $500$ epochs with early stopping after $100$ epochs without validation improvement. Results are averaged over $30$ random splits (compared to 10 random splits in Section \ref{sec:empirical_validation}). The generalization gap is defined as
$
\text{Gap} = \text{test\_loss} - \text{train\_loss}.
$
No extra regularizers (beyond weight decay) are used in these plots. Error bars in the accuracy panels are standard deviations across splits.

For the linear experiments (Cora and Chameleon, $L\in\{1,2\}$), we compare directly against the tighter linear FTGC bound (Theorem~\ref{thm:tighter_bound_deep}), which isolates depth, basis amplification, and the input spectrum (see Figures~\ref{fig:cora_linear} and \ref{fig:chameleon_linear}). The bound is evaluated via
\[
\mathcal{G}_V(\mathcal{F}) \;\le\; \frac{1}{n}\!\left(\prod_{l=0}^{L-1} C_{W,l}C_{\theta,l}\right)
\!\left(\sum_{i=1}^n \|\bm v_i\|_2^{\,2L}\,\mathcal{E}_0(\lambda_i)\right)^{\!1/2}.
\]
For the nonlinear experiments ($L=2$ only), we use the nonlinear bound from Theorem~\ref{thm:data_dependent_bound} (see Figures~\ref{fig:cora_nonlinear} and \ref{fig:chameleon_nonlinear}). We only show $L=2$ for the nonlinear case because with $L=1$ the neural network is effectively linear and does not add anything new to this comparison. Across all runs, the models fit the labeled nodes well. On Chameleon, the minimum training accuracy is $76\%$, and on Cora, it is $94\%$. Since these are much larger than their corresponding test accuracies, the positive gaps are not due to underfitting.

\paragraph{Sensitivity Analysis.} 
To verify that our Fourier Transductive Gaussian Complexity (FTGC) bound truly captures the generalization gap via spectral properties rather than just parameter norms, we performed a sensitivity analysis by decomposing the bound into the Weight Term ($\prod \alpha C_{W,l}C_{\theta,l}$) and the Spectral Interaction Term ($\|\matVp\|_{2,\infty}\sqrt{\sum\|\bm v_i\|_2^2\mathcal{E}_0(\lambda_i)}$). For a 2-layer nonlinear model on Cora, the Pearson correlation drops drastically from $0.86$ (Full Bound, Figure~\ref{fig:cora_nonlinear}a) to $-0.19$ when using only the Weight Term. Similarly, on Chameleon, the correlation drops from $0.36$ (Figure~\ref{fig:chameleon_nonlinear}a) to $-0.53$. This confirms that the data-dependent spectral term is the crucial driver for predicting the generalization gap, validating our Fourier-domain analysis.

\paragraph{Large-Scale Validation on ogbn-arxiv.} 
To verify that our theoretical framework scales to larger networks, we extended our empirical validation to the ogbn-arxiv dataset (approximately 170,000 nodes). We maintained the same highly sparse label setting used for the smaller datasets (only 10 training nodes per class) to rigorously stress-test our bound.  Using the practical $L=2$ nonlinear model, we computed the FTGC bound across different polynomial bases. As shown in Figure~\ref{fig:ogbn_arxiv_nonlinear}, we observed a strong positive correlation, achieving a Pearson coefficient of $0.85$ and a Spearman rank coefficient of $0.71$. This confirms that our bound robustly tracks generalization behavior even on massive graphs.

\paragraph{Main observations.} The results shown in Figures~\ref{fig:cora_linear}--\ref{fig:ogbn_arxiv_nonlinear} suggest the following main observations:
\begin{enumerate}[leftmargin=0.8cm,itemsep=3pt]
\item In the bound–gap panels, we observe a clear positive correlation between our bound and the empirical gap on all evaluated datasets. We report the overall Pearson $r$ and Spearman $\rho$ (with 95\% CIs by Fisher transform) inside panels (a) and (c) of each figure.
\item The correlation is visibly stronger for $L=2$ than for $L=1$ (Figures \ref{fig:cora_linear} and \ref{fig:chameleon_linear}). This matches Theorem~\ref{thm:tighter_bound_deep}, as the basis term scales as $\|\bm v_i\|_2^{2L}$ and magnifies differences across bases and spectra with depth.
\item Points cluster by polynomial basis in the bound–gap plane, reflecting their amplification profiles $x\mapsto\sum_{k=0}^K P_k(x)^2$ (Figure~\ref{fig:basis_stability} of the main paper). Furthermore, stable bases like Chebyshev exhibit tighter correlations within their own group. This predictable behavior occurs because the spectral amplification profile of Chebyshev is relatively uniform (nearly constant) across the spectrum, preventing extreme fluctuations in the theoretical bound.

\item Chebyshev’s relatively uniform profile does not downweight any part of the spectrum. Because it amplifies all frequencies nearly equally (including the middle spectrum), the term $\sum_i \|\bm v_i\|_2^{2L}\mathcal{E}_0(\lambda_i)$ becomes strictly larger when $\mathcal{E}_0(\lambda_i)$ spreads broadly. This perfectly explains why it tends to yield both higher theoretical bounds and higher empirical generalization gaps (as seen in the top-right clusters of the plots).

\item Cora is homophilous, meaning low-frequency content is more predictive. Bases with stronger low-pass behavior (more U-shaped amplification in Figure~\ref{fig:basis_stability} of the main paper) tend to achieve higher test accuracy.
\end{enumerate}

\begin{center}
  \begin{minipage}{0.93\textwidth}
    \centering
    \captionsetup{hypcap=false}
    \includegraphics[width=\linewidth]{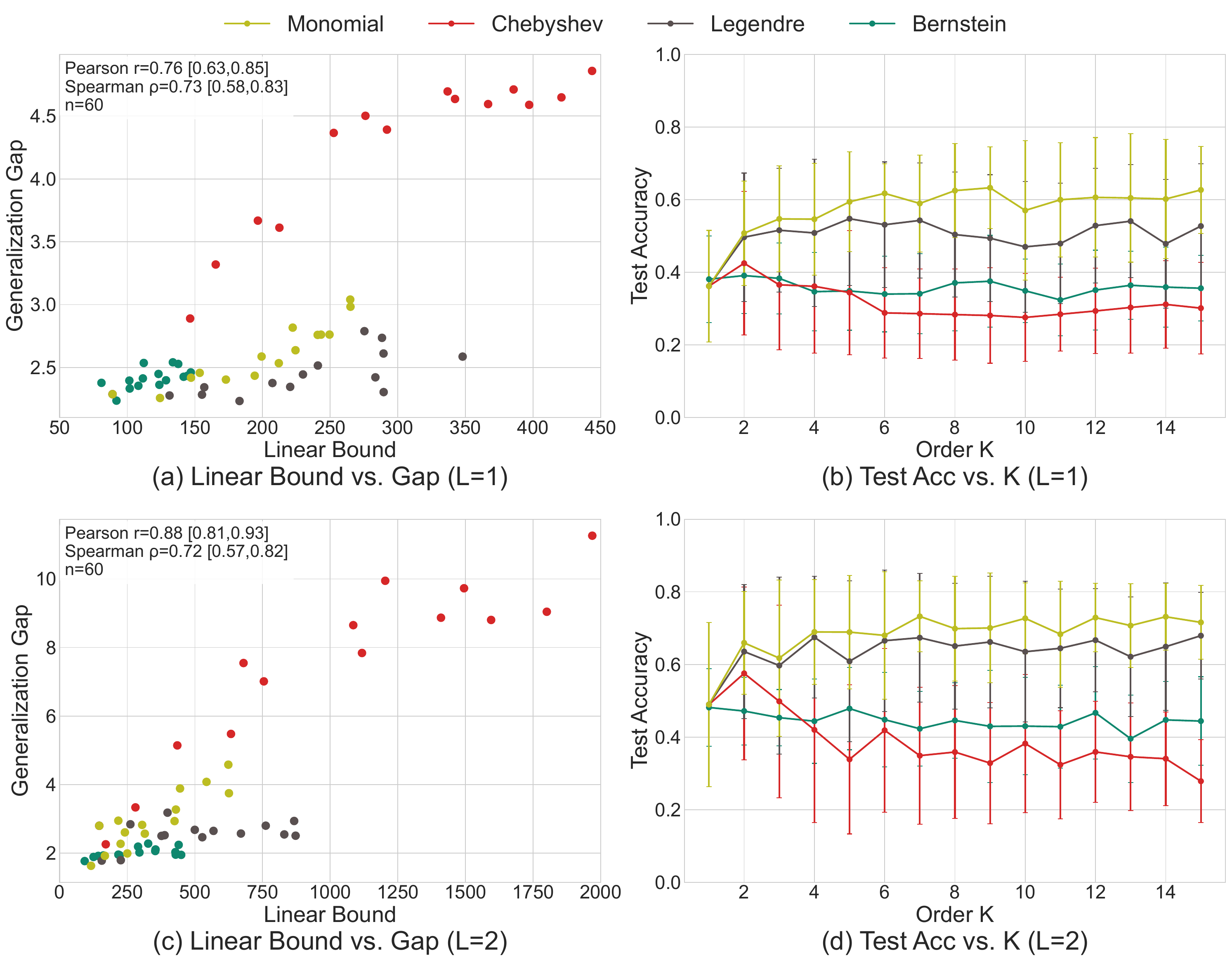}
    \captionof{figure}{Cora linear models.}
    \label{fig:cora_linear}
  \end{minipage}

  \vspace{0.5em}

  \begin{minipage}{0.93\textwidth}
    \centering
    \captionsetup{hypcap=false}
    \includegraphics[width=\linewidth]{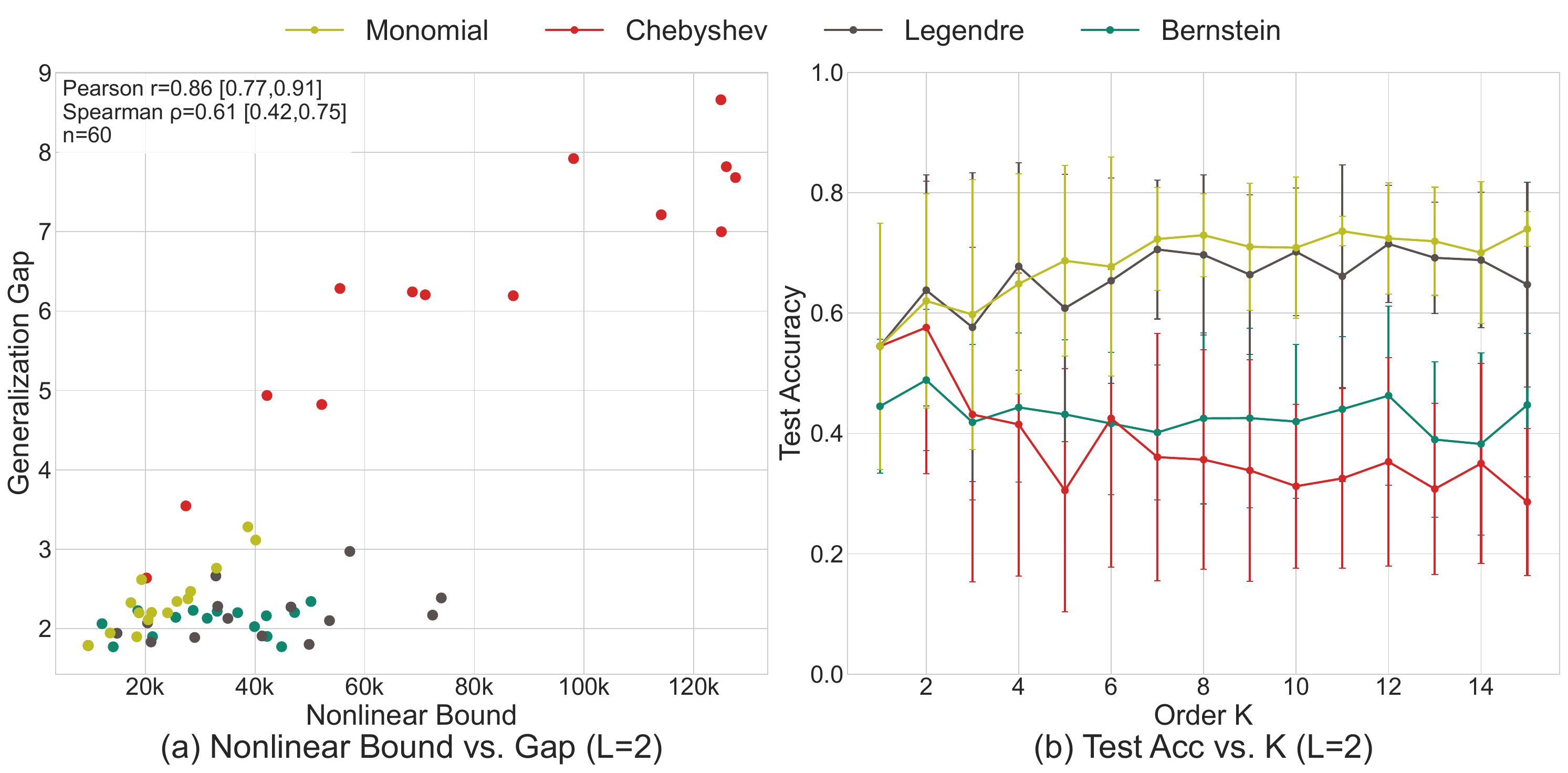}
    \captionof{figure}{Cora nonlinear model.}
    \label{fig:cora_nonlinear}
  \end{minipage}
\end{center}

\clearpage
\begin{center}
  \begin{minipage}{0.93\textwidth}
    \centering
    \captionsetup{hypcap=false}
    \includegraphics[width=\linewidth]{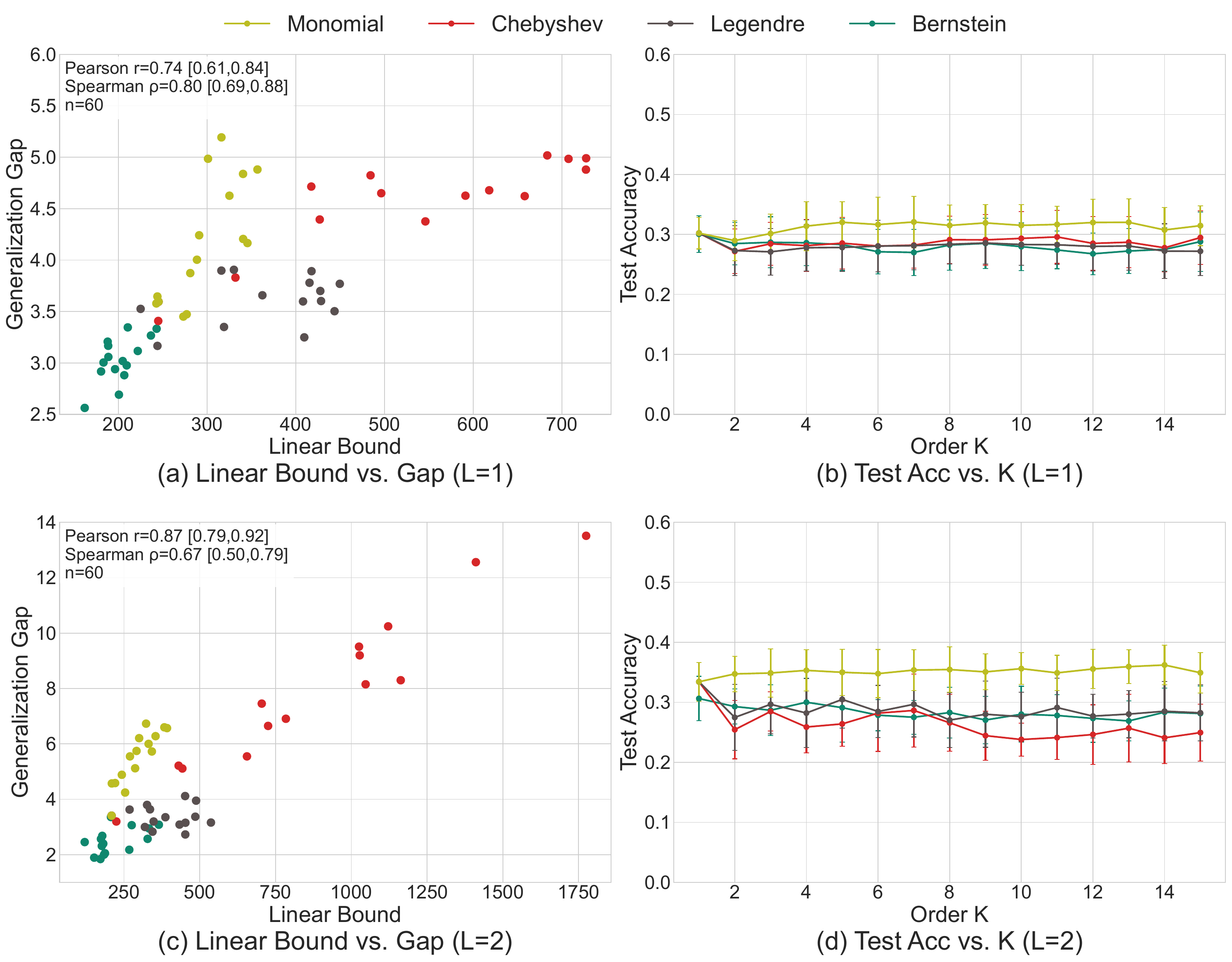}
    \captionof{figure}{Chameleon linear models.}
    \label{fig:chameleon_linear}
  \end{minipage}

  \vspace{0.5em}

  \begin{minipage}{0.93\textwidth}
    \centering
    \captionsetup{hypcap=false}
    \includegraphics[width=\linewidth]{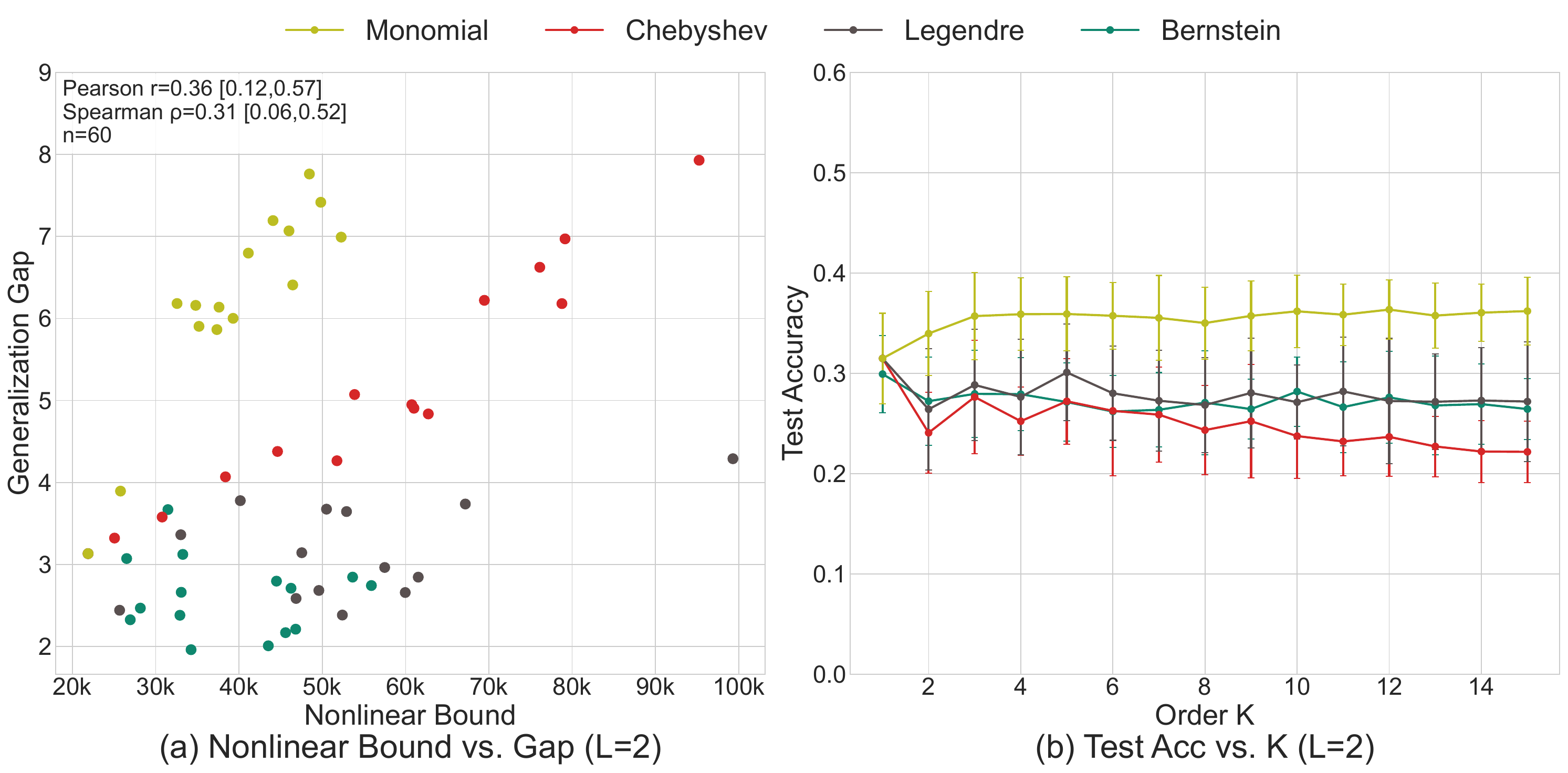}
    \captionof{figure}{Chameleon nonlinear model.}
    \label{fig:chameleon_nonlinear}
  \end{minipage}
\end{center}

\begin{center}
  \begin{minipage}{0.93\textwidth}
    \centering
    \captionsetup{hypcap=false}
    \includegraphics[width=\linewidth]{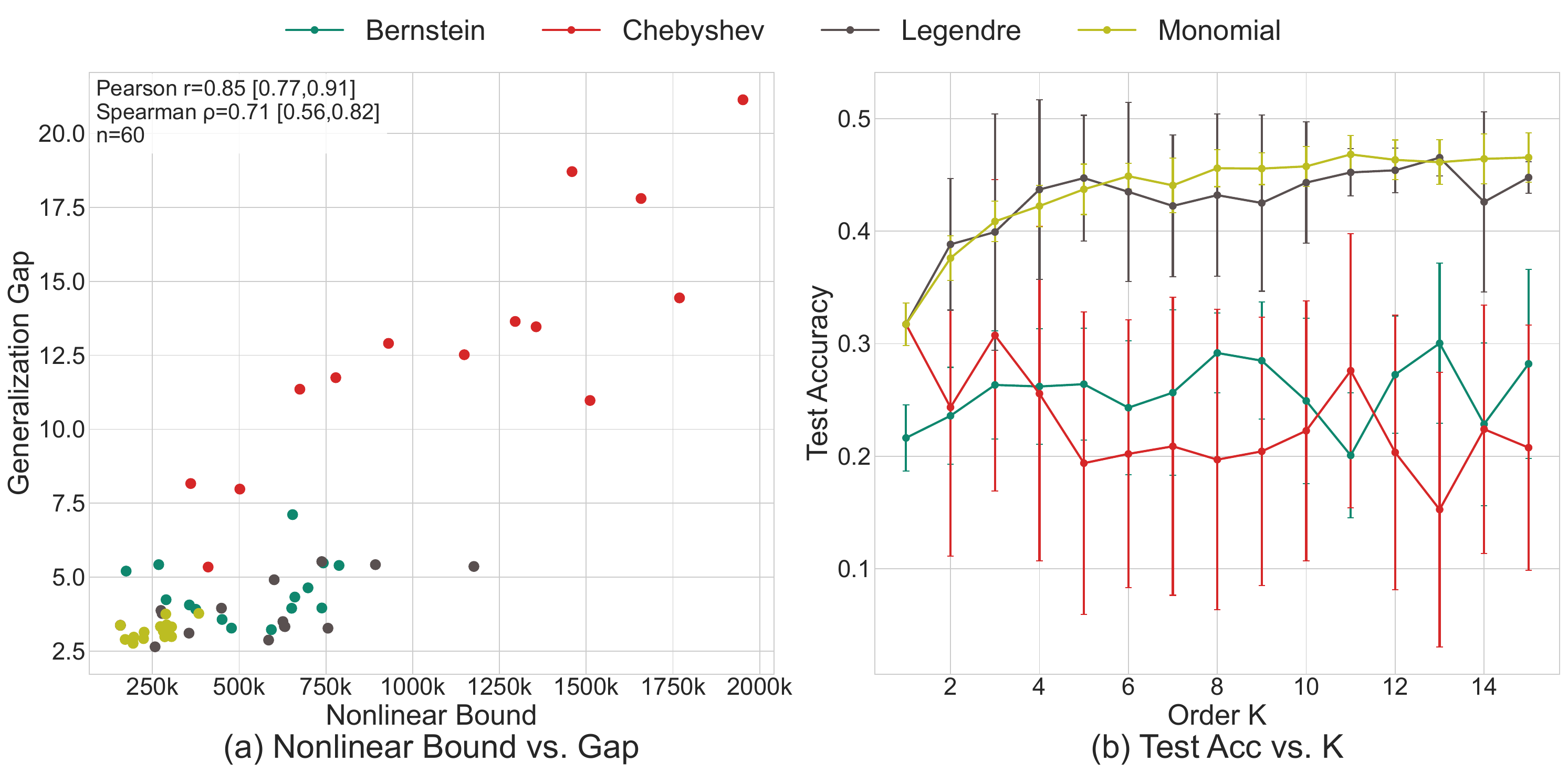} 
    \captionof{figure}{Ogbn-arxiv nonlinear model.}
    \label{fig:ogbn_arxiv_nonlinear}
  \end{minipage}
\end{center}

\section{Comparison with Prior Transductive Bounds}
\label{app:bound_comparison}

To quantitatively evaluate the tightness of our proposed Fourier-Transductive Gaussian Complexity (FTGC) bound, we compare it against the generalization bound derived specifically for GNNs by \cite{Esser2021LearningTheory}. For a fair comparison, we employ a standard GCN architecture, as the theoretical analysis by \cite{Esser2021LearningTheory} is limited to GCNs. We calculate both bounds across varying network depths ($L \in [1, 7]$) on the Cora and Chameleon datasets. 

As shown in \Cref{fig:bound_comparison}, as network depth increases, the spatial bound from \cite{Esser2021LearningTheory} suffers from exponential amplification due to its dependency on the spatial graph shift operator ($\|\normadj\|_{\infty}$). By depth 7, this bound becomes completely vacuous ($\sim O(10^{14})$). In contrast, our FTGC bound depends on the spectral norm of the generalized Vandermonde matrix ($||\matVp||_{2,\infty}$), which is 1 for GCNs. Consequently, our bound remains significantly tighter ($\sim O(10^5)$) and avoids exponential explosion, while still successfully capturing the generalization trends demonstrated earlier in App.~\ref{app:boundvsgap}.

\begin{figure}[htbp]
    \centering
    \begin{subfigure}{0.48\textwidth}
        \centering
        \includegraphics[width=\linewidth]{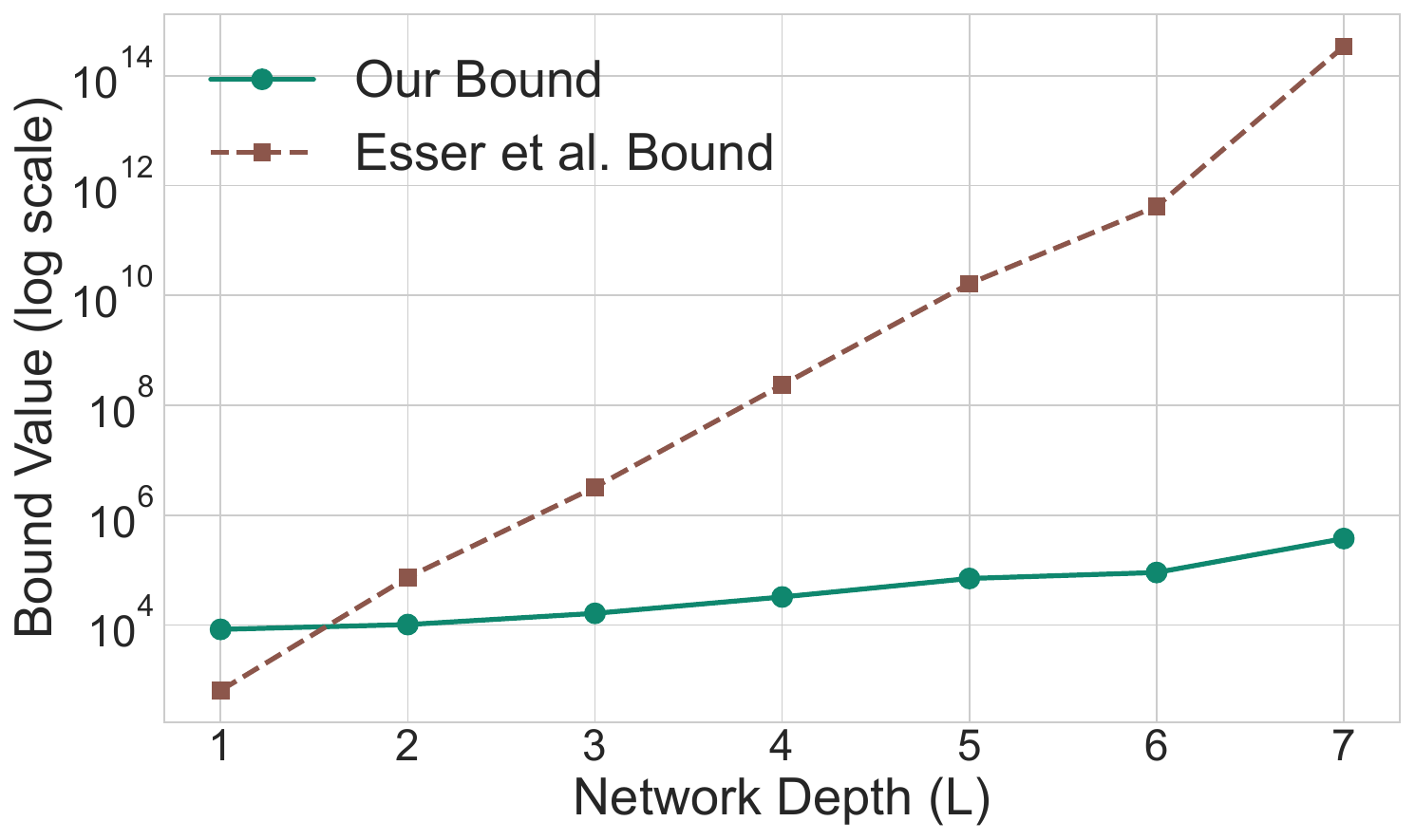}
        \caption{Cora}
    \end{subfigure}\hfill
    \begin{subfigure}{0.48\textwidth}
        \centering
        \includegraphics[width=\linewidth]{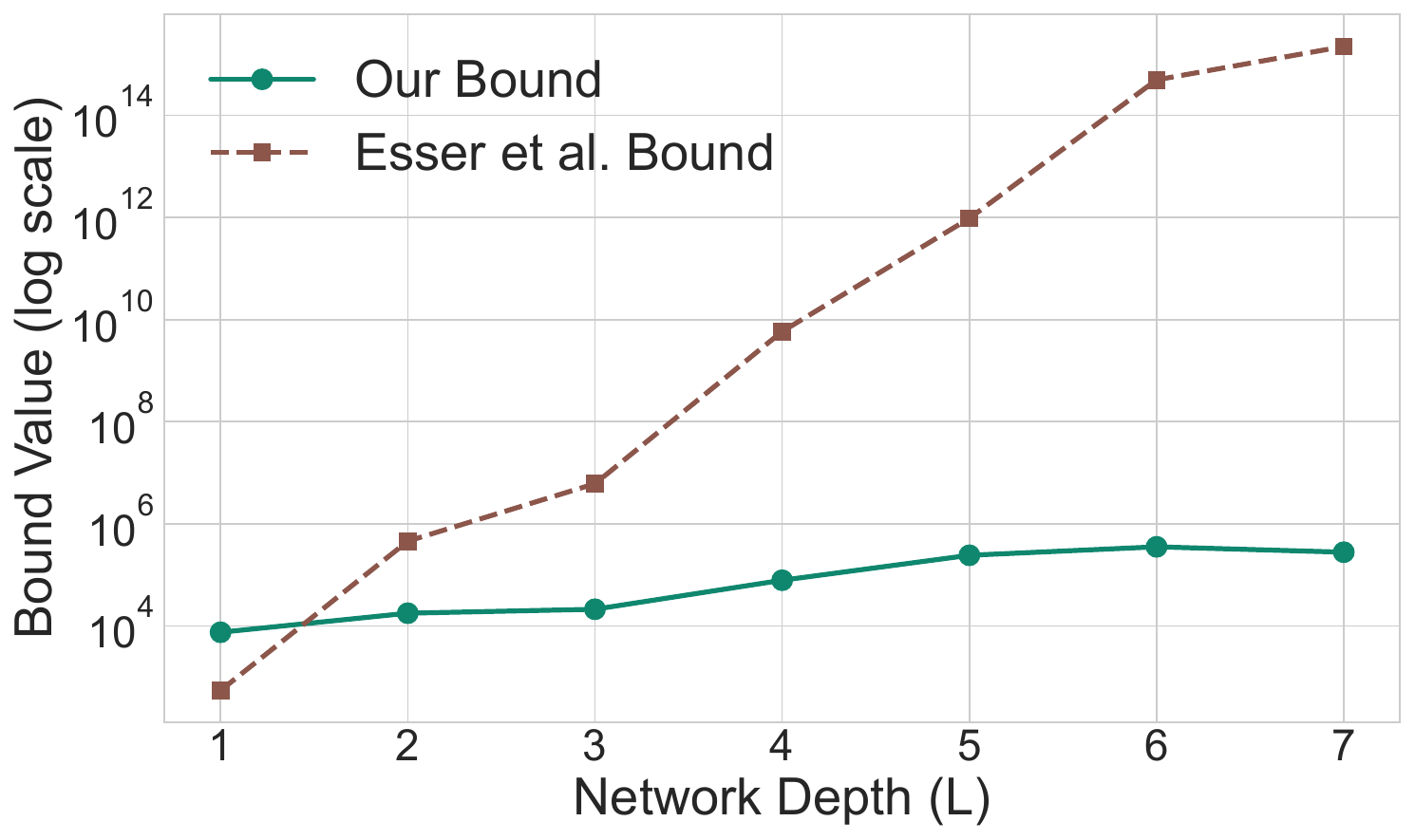}
        \caption{Chameleon}
    \end{subfigure}
    \caption{Comparison of generalization bounds.}
    \label{fig:bound_comparison}
\end{figure}

\section{Formal definitions and properties of the polynomial bases}
\label{app:fullbasisdef}

This section provides the formal definitions for the polynomial bases used in our analysis. The key property for our bounds is the filter's amplification profile, which measures the total response of the basis functions at a given frequency $x \in [-1, 1]$:
$$M_K(x) = \sum_{k=0}^K P_k(x)^2.$$
This quantity directly relates to the maximum row norm of the Vandermonde matrix $\matVp$, which appears in our generalization and stability bounds:
$$\|\matVp\|_{2,\infty} = \max_{i} \|\bm{v}(\lambda_i)\|_2 = \max_{\lambda \in \Lambda} \sqrt{M_K(\lambda)}.$$
For all four bases we consider, the amplification profile $M_K(x)$ reaches its maximum on $[-1,1]$ at the endpoints $x=\pm1$.

\paragraph{Monomial Basis.} This is the simplest polynomial basis, also known as the power basis. While straightforward, it is known to be poorly conditioned for high orders $K$, leading to numerical instability \citep{higham2002accuracy}. The basis functions are defined as:
\begin{equation}
    P_k(x) = x^k, \quad \text{for } k = 0, 1, \dots, K.
\end{equation}
Its amplification profile is a geometric series, $M_K(x) = \sum_{k=0}^K x^{2k}$, which reaches its maximum value of $M_K(\pm 1) = K+1$.

\paragraph{Chebyshev Basis.} The Chebyshev polynomials $T_k(x)$ are a sequence of orthogonal polynomials known for their  numerical stability and role in approximation theory \citep{mason2002chebyshev}. They are defined on the interval $[-1, 1]$ by the recurrence relation:
\begin{align}
    T_0(x) &= 1, \\
    T_1(x) &= x, \\
    T_{k+1}(x) &= 2x T_k(x) - T_{k-1}(x), \quad \text{for } k \ge 1.
\end{align}
Since $T_k(\pm 1)^2 = 1$ and $|T_k(x)| \leq 1$ for all $k$, the amplification profile reaches its maximum of $M_K(\pm 1) = K+1$ at the endpoints.

\paragraph{Legendre Basis.} Legendre polynomials $P_k(x)$ are another sequence of orthogonal polynomials on $[-1, 1]$, notable for being solutions to Legendre's differential equation \citep{abramowitz1964handbook}. They are defined by Bonnet's recurrence relation:
\begin{align}
    P_0(x) &= 1, \\
    P_1(x) &= x, \\
    (k+1)P_{k+1}(x) &= (2k+1)x P_k(x) - k P_{k-1}(x), \quad \text{for } k \ge 1.
\end{align}
Just like the Chebyshev polynomials, $P_k(\pm 1)^2 = 1$ and $|P_k(x)| \leq 1$, so the amplification profile has a maximum of $M_K(\pm 1) = K+1$ at the endpoints.

\paragraph{Bernstein Basis.} The Bernstein polynomials are known for their shape-preserving properties and numerical stability, particularly near interval endpoints \citep{lorentz2012bernstein}. For the spectral domain $[-1, 1]$, they are defined using an affine transformation $t = (x+1)/2$:
\begin{equation}
    P_k(x) = \binom{K}{k} t^k (1-t)^{K-k}, \quad \text{for } k = 0, 1, \dots, K.
\end{equation}
Because these polynomials are non-negative and sum to 1, their sum of squares is bounded by 1. The maximum amplification is $M_K(\pm 1) = 1$, which occurs at the endpoints where only one basis polynomial is non-zero.

\paragraph{Summary of Maximum Amplification.}
These maximum values directly determine the worst-case amplification term in our bounds. For a filter of order $K$, we get:
$$\|\matVp\|_{2,\infty} = \max_{x \in \Lambda} \sqrt{M_K(x)} \leq \max_{x \in [-1,1]} \sqrt{M_K(x)} = \begin{cases} \sqrt{K+1} & \text{for Monomial, Chebyshev, Legendre} \\ 1 & \text{for Bernstein}. \end{cases}$$
This shows that the depth-dependent terms in our bounds grow with $\sqrt{K+1}$ for the first three bases, but remain constant for the Bernstein basis.

To match worst-case amplification across bases, we can rescale
\[
\tilde P_k(x)=\frac{P_k(x)}{\sqrt{\max_{x\in [-1,1]} M_K(x)}},
\]
so that $\max_x \sum_{k=0}^K \tilde P_k(x)^2 = 1$ for all four bases and $\|\matVp\|_{2,\infty}\le 1$ (for fixed $K$). 
This keeps parameter-norm constraints comparable across bases.

\section{Hyperparameters, Datasets, and Splits}
\label{app:hyper}

\subsection{Datasets and Splits}
We conduct experiments on four public node classification datasets: the homophilic citation networks Cora and Citeseer \citep{sen2008collective}, and the heterophilic Wikipedia networks Chameleon and Squirrel \citep{pei2020geom}. For the large-scale bound validation in App.~\ref{app:boundvsgap}, we also use the OGBN-Arxiv dataset \citep{ogbdatasets}, which represents a much larger citation network. For all datasets, we use 10 random stratified splits where each split contains 10 training nodes per class, with the remaining nodes divided into a validation set (35\%) and a test set (65\%). We report the mean and 95\% confidence interval over these 10 splits.

\subsection{Model and Training}
Our model architecture implements a spectral filter layer with a residual connection, inserted between the two linear layers of an MLP. Specifically, input features are first processed by a linear layer and a ReLU activation to produce hidden feature. These features are then filtered in the spectral domain, and the result is added back to the original hidden features. A final linear layer then acts as a readout to produce the class logits. We use two distinct dropout rates, which are tuned separately. The first is applied to the input features and to the features after the spectral filtering block. The second is applied to the hidden features after the ReLU activation.

We use the Adam optimizer \citep{kingma2014adam} and train for a maximum of 2000 epochs, with early stopping triggered if the validation accuracy does not improve for 200 epochs.

\subsection{Hyperparameter Optimization}
For each dataset and polynomial basis, we conduct two separate hyperparameter searches using the Hyperopt library's Tree-structured Parzen Estimator (TPE) algorithm \citep{NIPS2011_86e8f7ab}, optimizing for the highest mean validation accuracy. The first search optimizes a baseline model where the regularizer strength $\lambda_{\text{EW}}$ is fixed at 0. The second search optimizes the fully regularized model over the complete hyperparameter space listed below.

\begin{itemize}[leftmargin=0.5cm]
    \setlength\itemsep{-2pt}
    \item \textbf{Learning Rate}: $\{0.0005, 0.001, 0.005, 0.01, 0.05, 0.1\}$
    \item \textbf{Weight Decay}: $\{0, 10^{-6}, 5 \cdot 10^{-6}, 10^{-5}, 5 \cdot 10^{-5}, 10^{-4}, 5 \cdot 10^{-4}, 10^{-3}\}$
    \item \textbf{Hidden Dimension}: $\{8, 16, 32, 64, 128\}$
    \item \textbf{Polynomial Degree}: $\{10\}$
    \item \textbf{Dropout 1 (Input/Post-Filter)}: $\{0.0, 0.2, 0.4, 0.6, 0.8\}$
    \item \textbf{Dropout 2 (Hidden)}: $\{0.0, 0.2, 0.4, 0.6, 0.8\}$
    \item \textbf{Regularization Strength} ($\lambda_{\text{EW}}$): $\{0.001, 0.005, 0.01, 0.05, 0.1, 0.5, 1, 5, 10\}$
\end{itemize}

\subsection{Code and Infrastructure}
All experiments were run on a single NVIDIA A100 GPU. The complete code to reproduce our results is available on this link: \url{https://github.com/vmart20/spectral-GNN-regularization}

\end{document}

%% file: math_commands.tex
\usepackage{amsmath,amsfonts,bm}

\def\eqref#1{equation~\ref{#1}}
\def\1{\bm{1}}

\DeclareMathAlphabet{\mathsfit}{\encodingdefault}{\sfdefault}{m}{sl}
\SetMathAlphabet{\mathsfit}{bold}{\encodingdefault}{\sfdefault}{bx}{n}